\documentclass[lettersize, journal]{IEEEtran}
\usepackage{algorithm}
\usepackage{multirow}
\usepackage[table ]{xcolor}
\usepackage{array}
\usepackage{algpseudocode}
\usepackage{amsmath}
\usepackage{cite}
\usepackage{booktabs}
\usepackage{graphicx}
\usepackage{float}
\usepackage{threeparttable}
\usepackage{epstopdf}
\usepackage{xfrac}
\usepackage{makecell}
\usepackage{amssymb}
\usepackage{amsthm}
\usepackage{subfig}
\usepackage{pifont}
\usepackage{longtable}
\usepackage{epstopdf}

\newtheorem{theorem}{Theorem}
\theoremstyle{definition}
\newtheorem{definition}{Definition}
\newtheorem{lemma}{Lemma}

\hyphenation{op-tical net-works semi-conduc-tor}

\begin{document}

	\title{Decomposability-Guaranteed Cooperative Coevolution for Large-Scale Itinerary Planning}
	
	\author{Ziyu~Zhang, Peilan~Xu,~\IEEEmembership{Member,~IEEE}, Yuetong~Sun, Yuhui~Shi,~\IEEEmembership{Fellow,~IEEE}, Wenjian~Luo,~\IEEEmembership{Senior Member,~IEEE}% <-this % stops a space
			\thanks{This work is partly supported by the Natural Science Foundation of Jiangsu Province (Grant No. BK20230419), the Natural Science Foundation of the Jiangsu Higher Education Institutions of China (Grant No. 23KJB520018)  and the National Natural Science Foundation of China (Grant No. U23B2058). (\textit{Corresponding author: Peilan Xu.})}% <-this % stops a space
			\thanks{Ziyu Zhang, Peilan Xu, and Yuetong Sun are with the School of Artificial Intelligence, Nanjing University of Information Science and Technology, Nanjing 210044, China.}
			\thanks{Wenjian Luo is with the School of Computer Science and Technology, Harbin Institute of Technology (Shenzhen), Shenzhen 518055, Guangdong, China. He is also with the Guangdong Provincial Key Laboratory of Novel Security Intelligence Technologies, Shenzhen 518000, Guangdong, China.}
			\thanks{Yuhui Shi is with the Department of Computer Science and Engineering, Southern University of Science and Technology, Shenzhen 518055, China.}
			\thanks{Email: 202283460036@nuist.edu.cn, xpl@nuist.edu.cn, 202283460028@nuist.edu.cn,shiyh@sustech.edu.cn, luowenjian@hit.edu.cn.}
		}
	
	\maketitle

	\begin{abstract}
		Large-scale itinerary planning is a variant of the traveling salesman problem, aiming to determine an optimal path that maximizes the collected points of interest (POIs) scores while minimizing travel time and cost, subject to travel duration constraints. This paper analyzes the decomposability of large-scale itinerary planning, proving that strict decomposability is difficult to satisfy, and introduces a weak decomposability definition based on a necessary condition, deriving the corresponding graph structures that fulfill this property. With decomposability guaranteed, we propose a novel multi-objective cooperative coevolutionary algorithm for large-scale itinerary planning, addressing the challenges of component imbalance and interactions. Specifically, we design a dynamic decomposition strategy based on the normalized fitness within each component, define optimization potential considering component scale and contribution, and develop a computational resource allocation strategy. Finally, we evaluate the proposed algorithm on a set of real-world datasets. Comparative experiments with state-of-the-art multi-objective itinerary planning algorithms demonstrate the superiority of our approach, with performance advantages increasing as the problem scale grows.
		
		\begin{IEEEkeywords}
			Large-scale multi-objective optimization, Traveling salesman problem, Itinerary planning, Cooperative coevolution
		\end{IEEEkeywords}
	\end{abstract}
	
	\IEEEpeerreviewmaketitle
	
	\section{Introduction}
	\label{sec: introduction}
	Itinerary planning is a class of the orienteering problem, where a traveler aims to determine an optimal route within a city under given duration constraints, selecting a subset of points of interest (POIs) to maximize the total collected score \cite{gunawan2016orienteering}. It can be seen as a variant of the traveling salesman problem (TSP) and a combination of the knapsack problem and TSP \cite{vansteenwegen2011orienteering}. As a real-world application, itinerary planning not only seeks to maximize the overall travel experience, i.e., the total collected score, but also considers objectives such as minimizing travel time and cost. While some travelers manually plan their trips, this process is time-consuming and unlikely to yield Pareto-optimal solutions \cite{rani2018development}. Consequently, automated itinerary planning has gained increasing attention, offering travelers more efficient and personalized solutions \cite{chen2022automatic}.
	
	Automated itinerary planning algorithms can be broadly categorized into exact and approximate methods. Exact algorithms, such as mixed-integer linear programming \cite{yu2015anytime} and general algebraic modeling systems \cite{bagloee2017multi}, guarantee optimal solutions but suffer from high computational complexity. Given that itinerary planning is NP-hard \cite{li2024research,ghobadi2023integrated,gonzalez2023recommendation}, approximate methods are often preferred for efficiency. These include k-nearest neighbor algorithms \cite{castillo2008samap} and greedy algorithms \cite{wang2024research}. Among approximate approaches, metaheuristic algorithms such as particle swarm optimization (PSO) \cite{yan2023multi} and ant colony optimization (ACO) \cite{chen2023application} have gained popularity. Their population-based search mechanisms not only enhance itinerary planning performance but also facilitate the discovery of Pareto-optimal solutions, addressing the problem’s multi-objective nature \cite{ruiz2022systematic}.
	
	As the problem scale increases, specifically, as the number of POIs grows, traditional algorithms suffer from the curse of dimensionality \cite{xu2021constraint, omidvar2015designing}. Fortunately, the real-world characteristics of itinerary planning introduce a structural advantage, i.e., large-scale itinerary planning problems often span multiple cities, where intercity travel time and costs are higher than intracity travel. This makes them well-suited for cooperative coevolution (CC) methods \cite{zhang2024cooperative}, which first decompose the problem into city-based components, then independently optimize each component using multi-objective evolutionary algorithms, and finally assemble the resulting Pareto-optimal sets into a complete solution. However, this approach relies on prior knowledge, implicitly assuming the validity of the decomposition while entirely neglecting interactions among components.
	
	Therefore, employing the CC framework to solve large-scale itinerary planning problems requires establishing the problem's decomposability from a graph-based model and then analyzing the imbalance and interactions among components. First, given a problem decomposition, the number and quality of POIs in each component are inherently imbalanced, reflecting disparities in tourism resources across different cities. Second, under constraints on total travel duration, all components compete for available travel days, leading to inter-component interactions. Finally, the routes formed by the endpoint of one component and the starting point of another introduce dependencies that interfere with the internal optimization of each component. To address these challenges, we propose decomposability-guaranteed cooperative coevolution for large-scale itinerary planning. Our main contributions are as follows:
	
	\begin{figure*}[ht!]
		\centering
		\small
		\subfloat[Large-Scale Itinerary Planning]{\includegraphics[width=0.3\textwidth]{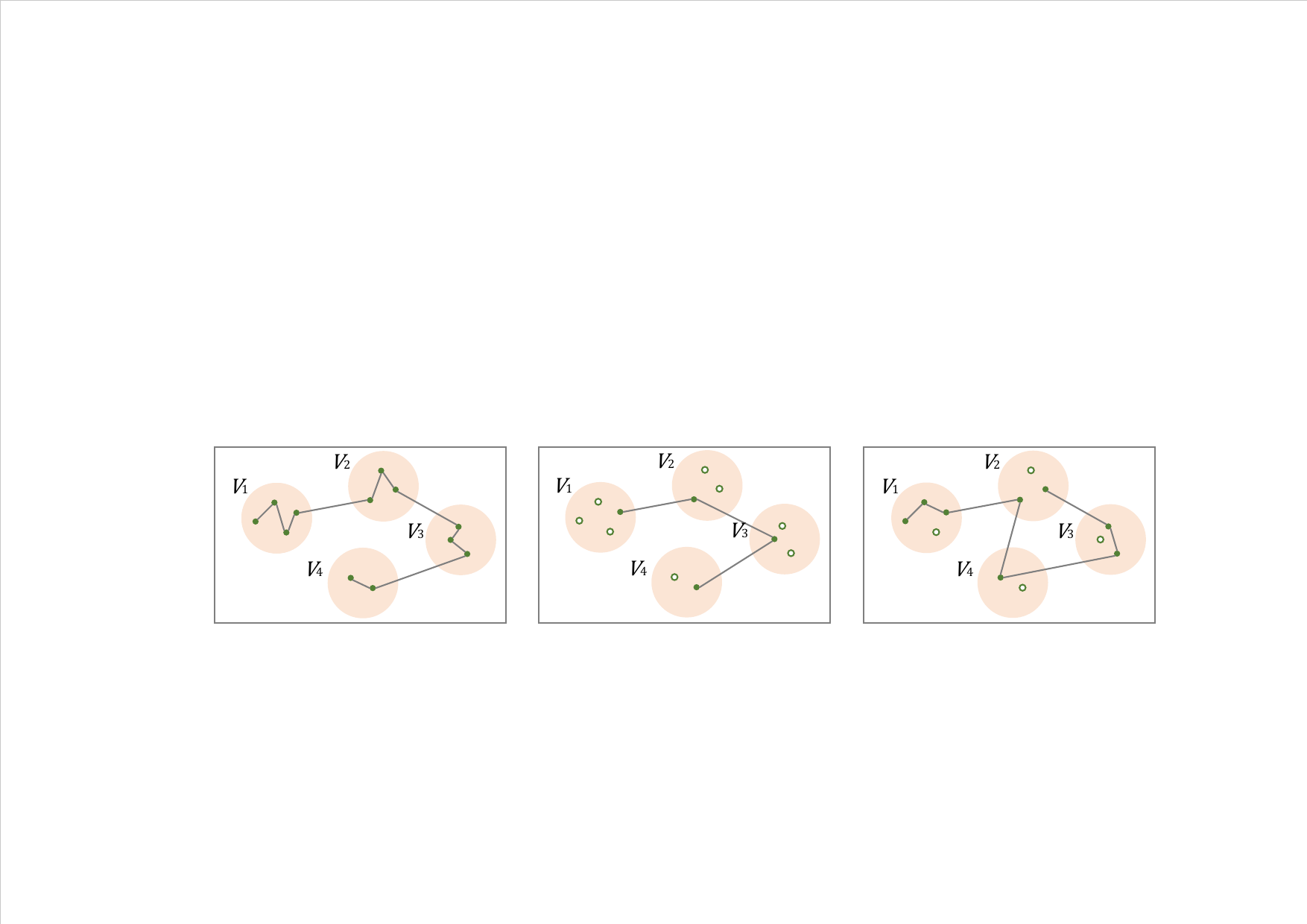}\label{fig:LSIP}}
		\subfloat[Generalized TSP] {\includegraphics[width=0.3\textwidth]{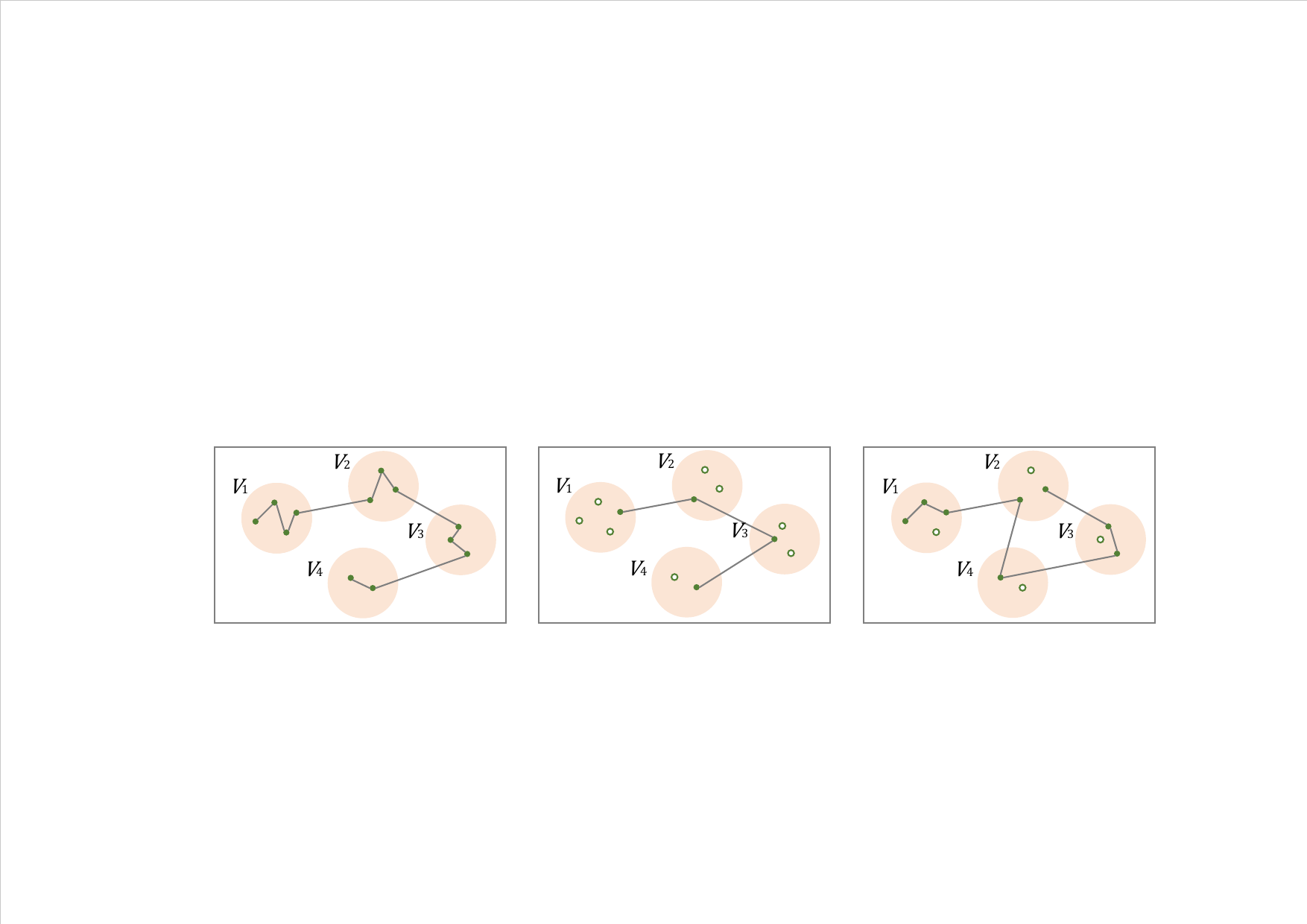}\label{fig:GTSP}}
		\subfloat[Clustered TSP] {\includegraphics[width=0.3\textwidth]{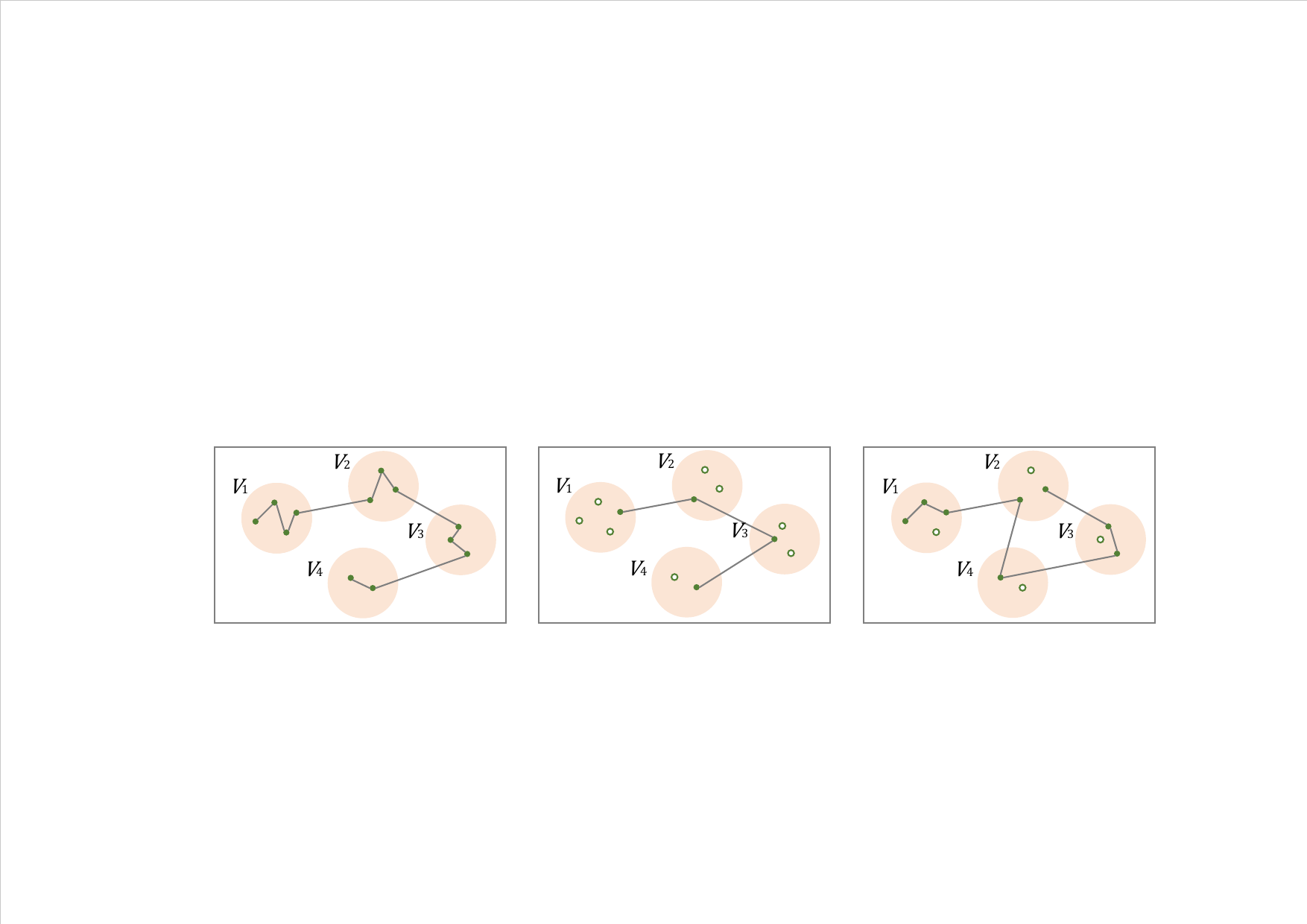}\label{fig:CTSP}}
		\caption{The comparative schematic of the generalized TSP, clustered TSP, and large-scale itinerary planning. Orange backgrounds indicate different clusters, solid circles represent visited vertices, and hollow circles denote unvisited vertices.}
		\label{fig:compar}
	\end{figure*}
	
	\begin{itemize}
		
		\item We first define strict decomposability, which requires that each component can be optimized independently, and prove that large-scale itinerary planning is generally not strictly decomposable. Then, based on a necessary condition for strict decomposability, we introduce weak decomposability and identify the types of graph structures that satisfy this relaxed condition.
		
		\item Building on the guarantee of decomposability, we design a novel CC framework for large-scale itinerary planning. First, we propose a dynamic decomposition method based on the hypervolume (HV) indicator of the normalized fitness within each component. Then, we quantify the optimization potential of each component based on its scale and contribution and develop a computational resource allocation strategy. Finally, we optimize each component using NSGA-II, where inter-component cooperation is achieved through solutions with the highest hypervolume contribution.
	\end{itemize}
	
	We conduct experiments on a set of real-world datasets, which include 420 POIs from 7 cities in China, 420 POIs from 7 cities in France, and 240 POIs from 4 cities in Germany, resulting in a total of 18 test cases. Compared with state-of-the-art multi-objective itinerary planning algorithms, the proposed DGCC demonstrates superior competitiveness and exhibits improved performance as the number of cities increases.
	
	The rest of this paper is organized as follows. Section \ref{sec:related work} introduces the large-scale itinerary planning problem and reviews multi-objective itinerary planning algorithms as well as cooperative coevolutionary algorithms. Section \ref{sec: algorithm} derives the definition of weak decomposability along with its corresponding graph structures and provides a detailed description of the proposed DGCC. Section \ref{sec:experiments} presents the experimental setup, including data collection and parameter settings, and reports the experimental results. Finally, Section \ref{sec: Conclusion} concludes the paper.

	\section{Preliminaries}
	\label{sec:related work}
	In this section, we first present the problem formulation of the large-scale itinerary planning problem and discuss heuristic and evolutionary computation approaches designed for multi-objective itinerary planning. We then review cooperative coevolution for large-scale optimization.
	
	\subsection{Large-Scale Itinerary Planning}
	\label{sec:PSMF}
	
	Traditional itinerary planning problems involve a traveler visiting a single city over $D$ days, aiming to select a subset of POIs from the city's POI set and plan an optimal route. As the number of POIs increases, itinerary planning exhibits large-scale characteristics. Given the limited number of POIs within a single city, large-scale itinerary planning problems typically arise in multi-city contexts \cite{zhang2024cooperative}. Consequently, the POI set can be naturally partitioned into multiple clusters, each representing the POIs within a specific city. The formal problem description is provided below.
	
	Let \( G = (V, E) \) be an undirected, connected, weighted graph, where \( V = \{v_1, v_2, \dots\} \) represents the set of vertices and \( E = \{e_1, e_2, \dots\} \) represents the set of edges. Each edge connects two distinct vertices \( v_i \) and \( v_j \in V \), with \( i \neq j \), and is associated with a non-negative weight vector \( \mathbf{w}(v_i, v_j) \). Additionally, each vertex \( v_i \in V \) is also associated with a non-negative weight vector \( \mathbf{w}(v_i) \).
	
	In the large-scale itinerary planning problem, the set of vertices \( V \) is partitioned into \( m \) disjoint clusters \( V_1, V_2, \dots, V_m \), satisfying \( V_i \cap V_j = \emptyset \) for all \( i \neq j \), with \( V = \bigcup_{i=1}^m V_i \). Each cluster must be visited at least once, and each vertex \( v \) can be visited at most once. The objective of the large-scale itinerary planning problem is to select a subset of \( V \) and determine the Pareto optimal path under given constraints, such as duration limitations, to optimize the objective function vector.
	
	The large-scale itinerary planning problem is a variant of the TSP, distinguished by the partitioning of the vertex set and constrained vertex visitation rules. Similar TSP variants that incorporate vertex set partitioning include the generalized TSP \cite{srivastava1969generalized} and the clustered TSP \cite{chisman1975clustered}. Figure \ref{fig:compar} illustrates the differences among these three TSP variants. In the generalized TSP, the objective is to find the shortest Hamiltonian cycle while ensuring that exactly one vertex from each cluster is visited, as shown in Figure \ref{fig:GTSP}. The clustered TSP follows a similar structure but requires visiting all vertices within each cluster, as shown in Figure \ref{fig:CTSP}. Evidently, both variants restrict each cluster to be visited only once. In contrast, the large-scale itinerary planning problem requires visiting more than one vertex per cluster and allows multiple visits to the same cluster, as shown in Figure \ref{fig:LSIP}. Thus, it can be regarded as a more general form of both the generalized TSP and the clustered TSP.
	
	A considerable body of research has focused on solving multi-objective itinerary planning problems, employing both heuristic and evolutionary computation approaches. Several studies have utilized heuristic algorithms. For instance, Castillo et al. \cite{castillo2008samap} applied the k-nearest neighbor algorithm in the SAMAD tourism planning tool, while Brilhante et al. \cite{brilhante2015planning} used a greedy algorithm as the foundation of TRIPBUILDER, an unsupervised framework for personalized city sightseeing tours based on public data from platforms such as Wikipedia and Flickr. In contrast, many studies have leveraged evolutionary computation techniques. Jean-Marc \cite{jean2005challenges} formulated multi-objective optimization problems for real-world itinerary planning and solved them using multi-objective simulated annealing. This study focused on optimizing sightseeing itineraries that integrate hotel stays and transportation options such as buses and cars, providing decision support for travelers. Yan \cite{yan2023multi} proposed an improved particle swarm optimization algorithm to enhance flexibility and personalization, addressing the limitations of single-itinerary approaches. Kolaee et al. \cite{kolaee2023local} introduced a multi-objective optimization model for medical tourism, aiming to minimize costs while maximizing trip attractiveness through patient-to-hospital assignment and city visit scheduling. Their local search-based NSGA-II achieved a balanced trade-off between cost and attractiveness within an hour. Huang et al. \cite{huang2024uncertain} developed a multi-objective programming model under uncertainty to optimize travel time, cost, and tourist satisfaction. By transforming the model using inverse uncertainty distribution and solving it with an ant colony optimization, they demonstrated its effectiveness through a numerical example.
	
	\subsection{Cooperative Coevolution}
	
	Cooperative coevolution (CC) \cite{potter1994cooperative} is an effective approach for solving large-scale optimization problems. It follows a divide-and-conquer strategy, decomposing a large-scale problem into lower-dimensional components that are evolved cooperatively. CC has also been increasingly applied to multi-objective optimization \cite{ficici2008multiobjective,antonio2015non}. Iorio and Li \cite{iorio2004cooperative} proposed a cooperative coevolutionary multi-objective algorithm that employs non-dominated sorting to reward successful collaborations among sub-populations. Gong et al. \cite{gong2016multiobjective} developed a multi-objective cooperative coevolutionary algorithm for hyperspectral sparse unmixing, addressing the challenges of the NP-hard \( l_0 \)-norm problem by simultaneously optimizing reconstruction, sparsity, and regularization terms. Antonio and Coello \cite{antonio2015non} introduced a collaboration formation mechanism for cooperative coevolutionary multi-objective evolutionary algorithms, leveraging the hypervolume indicator to select individuals from different species to form solutions.
	
	Problem decomposition is a critical step in CC \cite{omidvar2021review,ma2018survey,antonio2017coevolutionary}. However, in most real-world problems, variables exhibit interactions, and the optimal decomposition strategy is generally unknown \cite{xu2023large}, prompting the development of more adaptive decomposition methods. For instance, Li et al. \cite{li2021mlfs} proposed a clustering-based decomposition strategy to reduce the computational cost of identifying an optimal decomposition. Guo et al. \cite{guo2023cooperative} developed a fuzzy decomposition method that incorporates feedback during optimization, improving adaptability. Meanwhile, some studies have focused on the allocation of computational resources to subcomponents. Traditional CC frameworks distribute computational resources equally among all components. However, Omidvar et al. \cite{omidvar2011smart} identified imbalances in component contributions and introduced the contribution-based cooperative coevolution framework to address this issue. Luo et al. \cite{luo2019many} proposed difficulty-based cooperative coevolution to handle subproblems of varying complexity. Xu et al. \cite{xu2023difficulty} extended this approach by integrating both contribution and difficulty metrics, introducing difficulty-contribution-based cooperative coevolution to mitigate inaccuracies in contribution assessment caused by component difficulty. Additionally, Liang et al. \cite{liang2018novel} developed a resource reallocation mechanism that detects stagnation in the optimization process and redistributes computational resources accordingly.
	
	\section{The Proposed Algorithm}
	\label{sec: algorithm}
	
	The efficiency of large-scale optimization algorithms largely depends on their ability to effectively exploit problem structures \cite{omidvar2021review}. In large-scale itinerary planning, the set of POIs, denoted as \( V \), is partitioned into clusters \( \{V_1, V_2, \dots, V_m\} \). Thus, a fundamental consideration is how to establish the equivalence between itinerary planning within each cluster and planning over the entire set \( V \). Furthermore, given a specific decomposition scheme, it is essential to address the interactions between components and identify their inherent properties. Specifically, independently planned itineraries within clusters still exhibit spatiotemporal interactions. Temporally, when the total travel duration is fixed, the assignment of travel days across cities must be considered. Spatially, the independence of itineraries across cities depends on intercity travel distances and urban transportation networks between POIs. To tackle these challenges, this section first analyzes the problem’s decomposability and introduces the definition of weak decomposability. Then, we present the core steps of CC for large-scale multi-objective itinerary planning, including problem decomposition, computational resource allocation, and component coevolution, along with a complete framework.
	
	\subsection{Decomposability Analysis}
	
	To mitigate the curse of dimensionality in large-scale itinerary planning, treating each cluster as a component and optimizing them independently is a promising approach. However, this strategy relies on two key assumptions: (1) each cluster is visited exactly once, and (2) itineraries within each cluster are independent of each other. Nevertheless, the optimal solution obtained under these assumptions does not always align with the optimal solution of the original problem. Therefore, it is essential to analyze the decomposability of large-scale itinerary planning.
	
	Similar to general large-scale optimization problems, a large-scale itinerary planning problem is defined as decomposable if it can be partitioned into several components that can be solved separately. This is formally stated in definition as follows.
	
	\begin{definition}[Strictly Decomposable]\label{def:decomposable}
		A large-scale itinerary planning is said to be \emph{strictly decomposable} if and only if for each cluster \(V_i\), there exists an optimal subpath \(\mathbf{x}_i\) that can be independently determined, and the optimal path of \(V\) is a permutation of the concatenation of all such subpaths, i.e.,
		\[
		\arg\min F(V) = \Pi\left(\arg\min F(V_1), \dots, \arg\min F(V_m)\right),
		\]
		where \(\{V_1, \dots, V_m\}\) is a partition of the vertex set \(V\), such that \(V = V_1 \cup \cdots \cup V_m\) and \(V_i \cap V_j = \emptyset\) for all \(i \neq j\), and \(\Pi(\cdot)\) denotes the set of all possible permutations of its arguments.
	\end{definition}
	
	% Based on Definition \ref{def:decomposable}, we establish the relationship between a decomposable LSIP and the visit count of each cluster. The visit count \(\mathcal{N}(V_i)\) of cluster \(V_i\) is defined as the number of occurrences of vertex pairs \((x_j, x_{j+1})\) in the path \(\textbf{x}\), where \(x_{j+1} \in V_i\) and \(x_j \in V \setminus V_i\). Additionally, the visit count \(\mathcal{N}(V_i)\) for the cluster containing the first visited vertex \(x_1\) is initialized to 1.
	
	However, strict decomposability is rarely satisfied in practice. Since a large-scale itinerary planning problem requires visiting all clusters, interactions inevitably arise between the optimal subpath within clusters and their neighboring clusters. This leads to the following lemma:
	
	\begin{lemma}
		If no two path segments have the same objective value, then a large-scale itinerary planning problem cannot be strictly decomposed.
	\end{lemma}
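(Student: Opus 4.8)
The natural approach is proof by contradiction. Suppose the problem were strictly decomposable, so that $\arg\min F(V) = \Pi(\arg\min F(V_1), \dots, \arg\min F(V_m))$ in the sense of Definition \ref{def:decomposable}. First I would note that the no-ties hypothesis forces $\arg\min F(V)$ to be a singleton: two distinct optimal paths would share the minimal objective value, which is excluded; likewise each $\arg\min F(V_i)$ reduces to a single subpath $\mathbf{x}_i$. Hence the decomposability identity collapses to the statement that the unique optimal path over $V$ equals the \emph{entire} set $\Pi(\mathbf{x}_1,\dots,\mathbf{x}_m)$ of concatenations of the fixed subpaths $\mathbf{x}_1,\dots,\mathbf{x}_m$.

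The crux is then to show this set of concatenations is not a singleton. Because the instance is large-scale we have $m \geq 2$ and $|V| \geq 3$; since the clusters are pairwise disjoint and each $\mathbf{x}_i$ is nonempty, two concatenations built from different orderings or join points of $\mathbf{x}_1,\dots,\mathbf{x}_m$ traverse the disjoint vertex blocks in genuinely different orders and are therefore distinct path segments, even up to reversal. So $\Pi(\mathbf{x}_1,\dots,\mathbf{x}_m)$ contains at least two elements while $\arg\min F(V)$ contains exactly one, a contradiction. This is precisely the interaction flagged before the lemma: a fixed tuple of independently optimal subpaths admits several assemblies whose inter-cluster connecting edges differ, so by the no-ties hypothesis their objective values differ and they cannot all be optimal; worse, the truly optimal path may re-route near a cluster boundary to trade a non-optimal intra-cluster subpath for a cheaper connecting edge, and need not lie in $\Pi(\mathbf{x}_1,\dots,\mathbf{x}_m)$ at all.

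I expect the main obstacle to be the counting step --- verifying that distinct orderings or join points really do yield distinct path segments, which means disposing of the degenerate case $|V|=2$ and handling path reversal carefully because the graph is undirected. A secondary subtlety is that if $F$ is read as a genuine vector-valued objective rather than a scalar, the singleton claims must be replaced by a direct argument that at least one concatenation in $\Pi(\mathbf{x}_1,\dots,\mathbf{x}_m)$ is strictly dominated by a boundary-re-routed path and hence cannot lie in $\arg\min F(V)$; this is where the no-ties hypothesis, together with the existence of at least one nondegenerate connecting edge, does the real work.
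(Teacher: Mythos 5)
There is a genuine gap, and it sits exactly where you flagged a ``subtlety'': the reversal issue is not a side case to be disposed of, it is the engine of the paper's proof, and your opening move is inconsistent with it. You claim the no-ties hypothesis forces each \(\arg\min F(V_i)\) to be a singleton. But the graph is undirected, so every subpath and its reversal have identical objective value; either these count as two distinct ``path segments'' (in which case the no-ties hypothesis is unsatisfiable and the lemma is vacuous) or they are identified (in which case every \(\arg\min F(V_i)\), viewed as a set of directed subpaths, contains at least two representatives, \(\mathbf{x}^{(i)}\) and its reverse). The paper's proof takes the second reading and exploits it: the cluster-internal problem genuinely has (at least) two tied optima, \(\mathbf{x}^{(i)}=(v_s^{(i)},\dots,v_e^{(i)})\) and \(\mathbf{x}^{(i)\prime}=(v_e^{(i)},\dots,v_s^{(i)})\), while the no-ties hypothesis applied to the two \emph{connecting} segments \((v_e^{(j)},v_s^{(i)})\) and \((v_e^{(j)},v_e^{(i)})\) forces exactly one orientation to be strictly better once \(V_i\) is joined to its neighbor \(V_j\). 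Hence the choice within \(\arg\min F(V_i)\) is dictated by \(V_j\) and cannot be made independently --- that is the contradiction with Definition~\ref{def:decomposable}.

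Your cardinality-mismatch argument (left side a singleton, right side \(\Pi(\mathbf{x}_1,\dots,\mathbf{x}_m)\) with at least two elements) only refutes a literal set-equality reading of the definition. Under the operational reading the definition is clearly meant to carry --- the optimal path is \emph{some} assembly of independently determined optimal subpaths --- a multi-element right-hand side is no contradiction at all, since only one assembly needs to be optimal. To close the argument you must show that \emph{which} element of \(\arg\min F(V_i)\) (or which assembly) wins depends on the neighboring cluster, i.e., that independence fails; your proposal gestures at this in the final sentences (``the truly optimal path may re-route near a cluster boundary'') but never proves it. Replacing the counting step with the paper's reversal-plus-connecting-edge argument would repair the proof.
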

	
	\begin{proof}
		Suppose, for contradiction, that a large-scale itinerary planning problem is strictly decomposable. By Definition~\ref{def:decomposable}, this means that for each cluster \(V_i\), there exists an optimal subpath \(\mathbf{x}^{(i)} = (v_{s}^{(i)}, \dots, v_{e}^{(i)})\) that can be independently determined. The overall optimal path is then a permutation of the concatenation of all such subpaths. Since the large-scale itinerary planning requires visiting each cluster, there must exist at least one interaction between clusters. Without loss of generality, consider two adjacent clusters \(V_i\) and \(V_j\), where the transition between them is given by the path segment \((v_{e}^{(j)}, v_{s}^{(i)})\), meaning the path arrives at \(V_i\) from \(V_j\).
		
		Now, consider reversing the optimal subpath \(\mathbf{x}_i\), resulting in \(\mathbf{x}^{(i)'} = (v_{e}^{(i)}, \dots, v_{s}^{(i)})\). Since the underlying graph is undirected, both \(\mathbf{x}^{(i)}\) and \(\mathbf{x}^{(i)'}\) must have the same objective value, and thus both are optimal solutions for the subpath within \(V_i\).
		
		However, since no two path segments $\mathbf{x}^{(i)}$, we must have \(F(v_{e}^{(j)}, v_{s}^{(i)}) \neq F(v_{e}^{(j)}, v_{e}^{(i)})\). This means that the two possible paths \((v_{e}^{(j)}, v_{s}^{(i)}, \dots, v_{e}^{(i)})\) and \((v_{e}^{(j)}, v_{e}^{(i)}, \dots, v_{s}^{(i)})\) must have different objective values. Consequently, one of these must be strictly optimal, meaning that the choice between \(\mathbf{x}^{(i)}\) and \(\mathbf{x}^{(i)'}\) is determined by its interaction with \(V_j\), rather than being independently determined. This contradicts the assumption of strict decomposability, which requires that \(\mathbf{x}^{(i)}\) be independently determined. Hence, a large-scale itinerary planning problem cannot be strictly decomposable under the given condition.
	\end{proof}
	
	Therefore, it is necessary to relax the criterion for decomposability. We consider a necessary condition for decomposability based on the relationship between a decomposable large-scale itinerary planning and the visit count of each cluster, as stated in Lemma~\ref{lemma:suffc}. The visit count \(\mathcal{N}(V_i)\) of cluster \(V_i\) is defined as the number of occurrences of vertex pairs \((x_j, x_{j+1})\) in the path \(\textbf{x}\), where \(x_{j+1} \in V_i\) and \(x_j \in V \setminus V_i\). Additionally, the visit count \(\mathcal{N}(V_i)\) for the cluster containing the first visited vertex \(x_1\) is initialized to 1.
	
	\begin{lemma}\label{lemma:suffc}
		A necessary condition for a large-scale itinerary planning to be strictly decomposable is that its optimal path visits each cluster exactly once.
	\end{lemma}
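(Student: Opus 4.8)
The plan is to argue directly from Definition~\ref{def:decomposable}: assuming strict decomposability, I would show that \emph{every} path in $\arg\min F(V)$ enters each cluster exactly once, which is precisely the asserted necessary condition. First I would unpack the definition. Strict decomposability says that every optimal path of $V$ has the form $\Pi(\mathbf{x}_1,\dots,\mathbf{x}_m)$, where each $\mathbf{x}_i$ is the independently determined optimal subpath of cluster $V_i$ and therefore consists solely of vertices of $V_i$, and $\Pi$ ranges over rearrangements of these subpaths \emph{as whole blocks}. Consequently, in any such path the vertices of a fixed $V_i$ occupy one uninterrupted segment; no vertex of $V\setminus V_i$ appears between two vertices of $V_i$.

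Next I would translate this block structure into the visit-count function. Recall $\mathcal{N}(V_i)$ counts the consecutive pairs $(x_j,x_{j+1})$ with $x_j\in V\setminus V_i$ and $x_{j+1}\in V_i$ --- the number of ``entries'' into $V_i$ --- with the convention $\mathcal{N}(V_i)=1$ when $V_i$ contains the first visited vertex $x_1$. Because the $V_i$-vertices form a single contiguous block, there is at most one position where an outside vertex is immediately followed by a $V_i$-vertex: if that block is not at the start of the path there is exactly one such entry, and if it is at the start the initialization convention supplies the count $1$. In either case $\mathcal{N}(V_i)=1$, i.e., $V_i$ is visited exactly once. Since this holds for every cluster and every optimal path, the optimal path of a strictly decomposable problem visits each cluster exactly once.

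Two minor points I would verify in passing: each block $\mathbf{x}_i$ is nonempty because every cluster must be visited at least once while every vertex is visited at most once, so no block degenerates and no cluster is skipped; and transitions with both endpoints inside $V_i$ never contribute to $\mathcal{N}(V_i)$, so the interior structure of a block is irrelevant to the count. The only real obstacle is fixing the semantics of $\Pi$ --- one must justify reading ``a permutation of the concatenation of the subpaths'' as a block-wise rearrangement rather than an arbitrary interleaving of individual vertices. This reading is forced by the requirement in Definition~\ref{def:decomposable} that each $\mathbf{x}_i$ be determined \emph{independently} of the other clusters, since any interleaving would make a vertex's position within the global path depend on a neighboring cluster. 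Once that reading is adopted, the remainder is an immediate counting argument.
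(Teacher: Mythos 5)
Your proposal is correct and takes essentially the same route as the paper: both arguments reduce to the observation that strict decomposability forces the vertices of each cluster $V_i$ into a single contiguous block of the optimal path, whence $\mathcal{N}(V_i)=1$. The only difference is presentational --- the paper phrases it as a contradiction (if some cluster were entered twice, its split segments' optimality would depend on the connection points with neighboring clusters, violating independence), whereas you read the block structure off Definition~\ref{def:decomposable} directly; your explicit handling of the block-wise semantics of $\Pi$ and of the initialization convention for $\mathcal{N}$ is, if anything, slightly more careful than the paper's.
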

	\begin{proof}
		Suppose that the large-scale itinerary planning problem is strictly decomposable, so that for each cluster \(V_i\) there exists an independently optimal subpath \(\mathbf{x}^{(i)}\), and the global optimal path is a permutation of these subpaths. Assume by contradiction that there exists a cluster \(V_j\) such that \(\mathcal{N}(V_j) > 1\). This implies that the optimal path enters \(V_j\) at least twice, thereby splitting the traversal of \(V_j\) into two or more segments, say \(\mathbf{v}_1^{(j)}\) and \(\mathbf{v}_2^{(j)}\).
		
		Since these segments are separated by vertices in other clusters, their optimality is influenced by the connection points with adjacent clusters. This dependence contradicts the premise that the subpath within \(V_j\) is independently optimal. Hence, for strict decomposability, each cluster must be visited exactly once, i.e., \(\mathcal{N}(V_i) = 1\) for all \(V_i\).
	\end{proof}
	
	When a large-scale itinerary planning satisfies the necessary condition stated in Lemma 2, partitioning the problem into components at the cluster level results in direct interactions between components at the starting and/or ending points of subpaths. This, in turn, induces indirect interactions within each component. Similar to overlapping large-scale optimization problems, such interactions can drive the components toward convergence in a coevolutionary manner. Therefore, we define a large-scale itinerary planning that meets this necessary condition as weakly decomposable.
	
	\begin{definition}[Weakly Decomposable]
		A large-scale itinerary planning is said to be weakly decomposable if and only if its optimal path \(\textbf{x}\) visits each cluster exactly once, i.e., \(\mathcal{N}(V_i) = 1, \forall\, V_i. \)
	\end{definition}
	
	Then, Theorem \ref{the:weakly_decomp} establishes the conditions under which a large-scale itinerary planning defined on a graph \( G \) is weakly decomposable.
	
	\begin{theorem} \label{the:weakly_decomp}
		Let \( G = (V, E) \) be an undirected, connected, weighted graph with a vertex partition \( V = V_1 \cup \cdots \cup V_m \), where \( V_i \cap V_j = \emptyset \) for \( i \neq j \). Consider a large-scale itinerary planning problem on \( G \) with the objective function:
		\begin{equation}\label{eqt:object}
			f(\mathbf{x}) = \alpha \sum_{i=1}^{|\mathbf{x}|} w(x_i) + \beta \sum_{i=1}^{|\mathbf{x}|-1} w(x_i, x_{i+1}),
		\end{equation}
		where \( \mathbf{x} = (x_1, \ldots ) \) is a path in \( G \). The large-scale itinerary planning is weakly decomposable if:
		\begin{equation}\label{eqt:weak_dcomp}
			\sum_{i=1}^{p} w_{\max}(V_i) \leq \min \left\{ w_{\min}(V_j, V_k) \mid j \neq k \right\}.
		\end{equation}
		
		Here, \( w_{\max}(V_i) \) is the maximum edge weight within \( V_i \), and \( w_{\min}(V_j, V_k) \) is the minimum edge weight between \( V_j \) and \( V_k \).
	\end{theorem}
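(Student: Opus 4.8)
The plan is to establish \eqref{eqt:weak_dcomp}$\Rightarrow$(weak decomposability) by contradiction: assuming the weight condition, I would show that an optimal path cannot enter any cluster twice, so $\mathcal{N}(V_i)=1$ for every $i$ (each cluster is entered at least once since the problem forces it to be visited), which is exactly the definition. A handy bookkeeping identity is that for any path $\mathbf{x}$ one has $\sum_i \mathcal{N}(V_i) = 1 + \#\{\text{inter-cluster edges of }\mathbf{x}\}$, so ``visits each cluster exactly once'' is equivalent to ``$\mathbf{x}$ uses the minimum possible number $m-1$ of inter-cluster edges.'' Accordingly I would take an optimal path $\mathbf{x}^{\star}$, suppose $\mathcal{N}(C)\ge 2$ for some cluster $C$, and aim to produce a path with strictly fewer repeated visits and no larger $f$.

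First I would set up a local surgery. Decompose $\mathbf{x}^{\star}$ into maximal runs of vertices lying in a common cluster; since $\mathcal{N}(C)\ge 2$, two of these runs, $S_a$ and $S_b$, lie in $C$, and we may take them consecutive among the $C$-runs, separated by a nonempty block $Q\subseteq V\setminus C$. Then I would apply a $2$-opt-type reversal turning $\cdots S_a\,Q\,S_b\cdots$ into $\cdots S_a\,\mathrm{rev}(S_b)\,\mathrm{rev}(Q)\cdots$, which merges $S_a$ with $\mathrm{rev}(S_b)$ into a single $C$-run, drops $\mathcal{N}(C)$ by one, and leaves the visited vertex set — hence the term $\alpha\sum_i w(x_i)$ of \eqref{eqt:object} — unchanged. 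In the favorable configuration the only weights that change are a deleted edge out of $S_a$ into $Q$, which is inter-cluster of weight $\ge \min_{j\ne k} w_{\min}(V_j,V_k)$, and the new edge(s) created inside $C$, of total weight $\le \sum_i w_{\max}(V_i)$; by \eqref{eqt:weak_dcomp} the net change in $f$ is then $\le \sum_i w_{\max}(V_i) - \min_{j\ne k} w_{\min}(V_j,V_k)\le 0$, the summation on the left supplying exactly the slack needed when a cluster visited more than twice must be re-threaded through several runs. Iterating this move — or, more cleanly, applying it to an optimal path chosen to also minimize first the number of inter-cluster edges and then their total weight — drives all $\mathcal{N}(V_i)$ down to $1$ without increasing $f$, contradicting optimality of a path with $\mathcal{N}(C)\ge 2$.

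The step I expect to be the real obstacle is guaranteeing that the reversal is local enough that \emph{only} the deleted inter-cluster edge and intra-$C$ edges change weight. This is immediate when $C$ contains an endpoint of $\mathbf{x}^{\star}$, or when both endpoints of $\mathbf{x}^{\star}$ lie in the same cluster, since then a prefix/suffix reversal touches a single edge. The hard case is when $C$ is visited wholly in the interior of the path while the two endpoint clusters are each visited once: any reversal that fuses $S_a$ with $S_b$ then also re-wires an edge joining two clusters \emph{other} than $C$, and \eqref{eqt:weak_dcomp} bounds no inter-cluster edge from above. To close this I would either precede the merge by auxiliary reversals that rotate a $C$-run to an endpoint of the path, or build a global ``cluster-sorting'' rearrangement; controlling the inter-cluster edges such a rearrangement creates — likely invoking metric-type regularity of the weight vectors inherited from the underlying travel network — is where the substance of the argument lies.
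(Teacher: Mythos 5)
Your overall strategy (an exchange argument showing that any path which enters some cluster twice can be repaired without increasing $f$, applied to an optimal path) is the same as the paper's, and your bookkeeping identity reducing weak decomposability to ``uses only $m-1$ inter-cluster edges'' is fine. The problem is the specific surgery you chose. The $2$-opt reversal turning $\cdots S_a\,Q\,S_b\,R\cdots$ into $\cdots S_a\,\mathrm{rev}(S_b)\,\mathrm{rev}(Q)\,R\cdots$ deletes the inter-cluster edges $(a_e,q_s)$ and $(b_e,r_s)$ and creates the intra-$C$ edge $(a_e,b_e)$ together with a \emph{new} edge $(q_s,r_s)$; when $q_s$ and $r_s$ lie in different clusters this new edge is inter-cluster, and condition \eqref{eqt:weak_dcomp} only bounds inter-cluster weights from \emph{below}, so the net change can be positive. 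You diagnose exactly this failure yourself, but the two remedies you sketch are left open (and the second — importing metric regularity of the weights — would silently strengthen the hypotheses of the theorem). So as written the proof has a genuine gap precisely at the step you label ``where the substance of the argument lies.''

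The paper closes that case with a different surgery: it \emph{relocates} blocks rather than reversing them, and then reinserts every displaced sub-run into the run of its home cluster. In the two-cluster base case the offending $V_1$-run is moved to the front of the path, so the two created edges are one intra-$V_1$ edge and one intra-$V_2$ edge while two inter-cluster edges are deleted. In the inductive step the entire block sitting between two runs of $V_{n+1}$ is extracted (creating one intra-$V_{n+1}$ edge and deleting two inter-cluster edges), and each extracted piece $\mathbf{v}^{(i)}_2$ is spliced into the existing $V_i$-run at a cost of at most $2w_{\max}(V_i)$. The point of this construction is that \emph{every} edge it creates is intra-cluster, hence bounded above by some $w_{\max}(V_i)$ — which is exactly why the left-hand side of \eqref{eqt:weak_dcomp} is a sum of $w_{\max}$ over all clusters rather than a single term: the slack must pay for the new intra-cluster edges in each cluster that receives a reinserted fragment. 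If you replace your reversal by this relocate-and-reinsert move, the rest of your argument — including your tie-breaking device of choosing an optimal path that also minimizes the number of inter-cluster edges, which is a cleaner way to handle the non-strict inequality than the paper's phrasing — goes through.
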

	
	\begin{proof}
		We prove the theorem by induction on the number of clusters \( p \).
		
		\emph{Base Case} (\( m = 2 \)):
		Consider two clusters, \( V_1 \) and \( V_2 \). Let the path be represented as
		\[
		\mathbf{x} = \left( \mathbf{v}_1^{(1)}, \mathbf{v}_1^{(2)}, \mathbf{v}_2^{(1)}, \mathbf{v}_2^{(2)}, \ldots \right),
		\]
		where \( \mathbf{v}_i^{(1)} \) and \( \mathbf{v}_i^{(2)} \) denote subpath segments in \( V_1 \) and \( V_2 \), respectively. We construct a new path \( \mathbf{x}' \) by reordering these segments as
		\[
		\mathbf{x}' = \left( \mathbf{v}_2^{(1)}, \mathbf{v}_1^{(1)}, \mathbf{v}_1^{(2)}, \mathbf{v}_2^{(2)}, \ldots \right).
		\]
		
		The change in the objective function is given by:
		\[
		\begin{aligned}
			f(\mathbf{x}') - f(\mathbf{x}) &= \beta \Big[ w\left(v_{2,e}^{(1)}, v_{1,s}^{(1)}\right) + w\left(v_{1,e}^{(2)}, v_{2,s}^{(2)}\right) \\
			&\quad - w\left(v_{1,e}^{(1)}, v_{1,s}^{(2)}\right) - w\left(v_{2,e}^{(1)}, v_{2,s}^{(2)}\right) \Big] \\
			&\leq \beta \left( w_{\max}(V_1) + w_{\max}(V_2) - 2w_{\min}(V_1, V_2) \right).
		\end{aligned}
		\]
		
		By the theorem’s condition, \( w_{\max}(V_1) + w_{\max}(V_2) \leq w_{\min}(V_1, V_2) \), ensuring that \( f(\mathbf{x}') \leq f(\mathbf{x}) \). By repeatedly applying this reordering process, we obtain a path \( \mathbf{x}^* \) that visits each cluster exactly once while satisfying \( f(\mathbf{x}^*) \leq f(\mathbf{x}) \), thereby proving the base case.
		
		\emph{Inductive Step} (\( m = n + 1 \)):
		Assume the result holds for \( m = n \). Consider a path with \( m = n + 1 \) clusters:
		\[
		\mathbf{x} = (\ldots, \mathbf{v}_{1}^{(n+1)}, \ldots, \mathbf{v}_{2}^{(n+1)}, \ldots).
		\]
		
		By the inductive hypothesis, the segments preceding \( \mathbf{v}_{1}^{(n+1)} \) can be rearranged into a path \( (\mathbf{v}^{(1)}_1, \dots, \mathbf{v}^{(n)}_1 )\) that visits each of the first \( n \) clusters exactly once. To construct \( \mathbf{x}' \), we first extract the subpaths between \( \mathbf{v}_{1}^{(n+1)} \) and \( \mathbf{v}_{2}^{(n+1)} \), denoted as \(\{\mathbf{v}^{(1)}_2, \dots, \mathbf{v}^{(n)}_2 \}\). The change in cost due to this extraction is bounded by:
		\[
		\Delta_{n+1} \leq w_{\max}(V_{n+1}) - 2\min_{1 \leq i \leq n} w_{\min}(V_{n+1}, V_i).
		\]
		
		Next, reinserting subpaths \( \mathbf{v}_i^{(2)} \) into \( \mathbf{x}^{(1)} \) induces an additional cost change:
		\[
		\Delta_i \leq 2w_{\max}(V_i), \quad \text{for } 1 \leq i \leq n.
		\]
		
		Summing all cost changes, we obtain:
		\[
		\Delta \leq \sum_{i=1}^{n} 2w_{\max}(V_i) + w_{\max}(V_{n+1}) - 2\min_{j \neq k} w_{\min}(V_j, V_k).
		\]
		
		By the theorem’s condition,
		\[
		\sum_{i=1}^{n+1} w_{\max}(V_i) \leq \min_{j \neq k} w_{\min}(V_j, V_k),
		\]
		which ensures that \( \Delta \leq 0 \). Consequently, the rearranged path \( \mathbf{x}' \) satisfies \( f(\mathbf{x}') \leq f(\mathbf{x}) \). Repeating this reordering process ultimately yields \( \mathbf{x}^* \), where each cluster is visited exactly once and \( f(\mathbf{x}^*) \leq f(\mathbf{x}) \). By induction, the theorem holds for all \( m \).
	\end{proof}
	
	In this paper, the test cases satisfy the conditions of Theorem \ref{the:weakly_decomp}, and each objective function follows the form of Equation \eqref{eqt:object}. Therefore, the large-scale itinerary planning considered in this study is weakly decomposable.
	
	\subsection{Encoding Scheme and Problem Decomposition}
	\label{encoding_scheme}
	
	% For a travel itinerary planning problem over \( d \) days, the complete encoding of a candidate solution \( \mathbf{x} \) consists of a finite number of minimum-length encoding units \( x_i \) (\( 1 \leq i \leq d \)), where each \( x_i \) represents the itinerary for a single day. These units can be independently evaluated, satisfying the minimum-length requirement for input to the fitness function.
	
	In this paper, we adopt a similar encoding scheme to that in \cite{huang2019automatic} to handle the variation in the number of POIs visited per day in itinerary planning. We assume that the maximum number of POIs visited per day is \( M \), and the travel duration is $D$, leading to a total encoding length of \( D \cdot M \). If the number of planned POIs on a given day is fewer than \( M \), zeros are used as placeholders to maintain a consistent encoding length across all days. For example, if \( D = 2 \) and \( M = 4 \), the encoding \( (1,0,2,0,3,4,0,0) \) represents a two-day itinerary. The first day follows the sequence: POI 1 \(\to\) POI 2, while the second day follows: POI 3 \(\to\) POI 4.
	
	Since the problem is weakly decomposable, we partition the original problem into \( m \) components based on clusters. However, the encoding of candidate solutions does not directly correspond to these clusters, as its length depends on the travel duration \( D \) and the maximum number of POIs visited per day \( M \). Therefore, after determining the \( m \)-component decomposition, we further define a partitioning scheme for the encoding, which consists of two phases: initial decomposition and dynamic adjustment.
	
	\textbf{Initial Decomposition:} Before evolving the population, and without prior knowledge, we adopt an equal partitioning approach for the encoding and assign it to each component. Specifically, DGCC distributes an equal number of travel days across all cities. Given \( m \) components and an encoding length of \( D \cdot M \) for a candidate solution, the segment assigned to each component corresponds to a continuous block of length \( (D/m) \cdot M \). For simplicity, we assume that the total travel duration \( D \) is an integer multiple of the number of components \( m \).
	
	\textbf{Dynamic Decomposition:} As the subpopulations evolve, we progressively acquire information about the problem. Based on these heuristic insights, the encoding partitioning scheme is dynamically adjusted to accommodate the inherent imbalance of the problem, particularly the uneven distribution of tourism resources across different cities.
	
	The core idea of dynamic decomposition is to assign more basic encoding units to components whose POI sets contain a higher number of high-quality POIs. To estimate the actual contribution of each component, consider two candidate solutions \( \mathbf{x} \) and \( \mathbf{y} \), which differ only in the \( i \)-th component, i.e., \( \mathbf{x}^{(i)} \) and \( \mathbf{y}^{(i)} \), while all other components remain unchanged. The corresponding change in fitness, as defined by Equation \eqref{eqt:object}, is given by
	\begin{equation}\label{eqt:diff_xy}
		\begin{aligned}
			f(\mathbf{x}) - f(\mathbf{y})
			&= f(\mathbf{x}^{(i)}) - f(\mathbf{y}^{(i)}) \\
			&\quad + \beta \left[ w(x^{(i-1)}_e, x^{(i)}_s) - w(y^{(i-1)}_e, y^{(i)}_s) \right. \\
			&\quad \left. + w(x^{(i)}_e, x^{(i+1)}_s) - w(y^{(i)}_e, y^{(i+1)}_s) \right].
		\end{aligned}
	\end{equation}
	where the term \( f(\mathbf{x}_i) - f(\mathbf{y}_i) \) represents the internal optimization of a component, while the remaining terms account for the external optimization effects induced by interactions with neighboring components. Thus, the internal optimization of a component reflects the characteristics of its corresponding cluster, including the number and quality of POIs. In contrast, the remaining terms capture the interactions between components, such as the edge weights between POIs in adjacent clusters.
	
	Considering that components may have varying encoding lengths, directly adjusting the encoding length based on their internal objective values may lead to the Matthew effect or the so-called resource curse. For example, in maximizing the sightseeing score, a longer travel duration tends to yield a higher score, whereas in minimizing travel costs, a longer duration results in greater expenses. This discrepancy implies that components with different encoding lengths may be unfairly compared. To ensure a fairer evaluation, we normalize each components’ fitness value by the number of basic encoding units, i.e., the assigned travel duration. Specifically, we define the normalized fitness as \( \tilde{f}(\textbf{x}^{(i)}) = f(\textbf{x}^{(i)}) / d_i \), where $d_i$ is the assigned travel days for the \( i \)-th component. Based on the normalized fitness values \( \tilde{f}(\textbf{x}^{(i)}) \) for each component, we derive the dynamic decomposition method for the problem, as outlined in Algorithm \ref{alg:Decomposition}.
	
	\begin{algorithm}[htbp]
		\caption{Dynamic Adjustment for Components}
		\label{alg:Decomposition}
		\begin{algorithmic}[1]
			\State $HV \gets \emptyset$;
			\For{$i\gets 1$ \textbf{to} $m$ }
			\For{each $\textbf{x}^{(i)}_j \in P_i$ }
			\State Calculate $\tilde{f}(\textbf{x}_j^{(i)}) = f(\textbf{x}_j^{(i)}) / d_i$;
			\EndFor
			\State Calculate $HV_i$ based on $\tilde{f}(\textbf{x}_j^{(i)})$
			\State $HV \gets HV \cup \{HV_i\}$;
			\EndFor
			\State $i_{\max} = \arg\max HV$;
			\State $i_{\min} = \arg\min \{ HV_i \in HV \mid d_i > 1 \}$;
			\For{$i \gets i_{\max}$ \textbf{to} $i_{\min}$}
			\If{$i \neq i_{\min}$}
			\State Add and randomly initialize an encoding unit closer to $i_{\min}$;
			\EndIf
			\If{$i \neq i_{\max}$}
			\State Remove an encoding unit closer to $i_{\max}$;
			\EndIf
			\EndFor
		\end{algorithmic}
	\end{algorithm}
	
	For each component, the normalized fitness \(\tilde{f}(\textbf{x}^{(i)})\) of its corresponding subpopulation \(P_i\) is computed for each objective, and the associated hypervolume \(HV_i\) is obtained. Next, the component with the maximum hypervolume in the set \( HV = \{HV_1, \dots, HV_m\} \) is identified as \( i_{\max} \). The component with the minimum hypervolume is selected from the subset of components whose encoding units are more than one, and is denoted as \( i_{\min} \). A basic encoding unit is then transferred from the \( i_{\min} \)-th component to the \( i_{\max} \)-th component. As illustrated in Figure \ref{fig:resource}, assuming \( i_{\max} < i_{\min} \), this transfer is performed by removing an encoding unit from the \( i_{\min} \)-th component that is closest to the \( i_{\max} \)-th component and adding it to the \( (i_{\min} - 1) \)-th component, where it is randomly initialized. Assuming \( i_{\max} < i_{\min} \), this process is repeated iteratively until all components within the interval \([i_{\max}, i_{\min}]\) have been updated.
	
	\begin{figure}[htbp]
		\centering
		\includegraphics[width=0.5\textwidth]{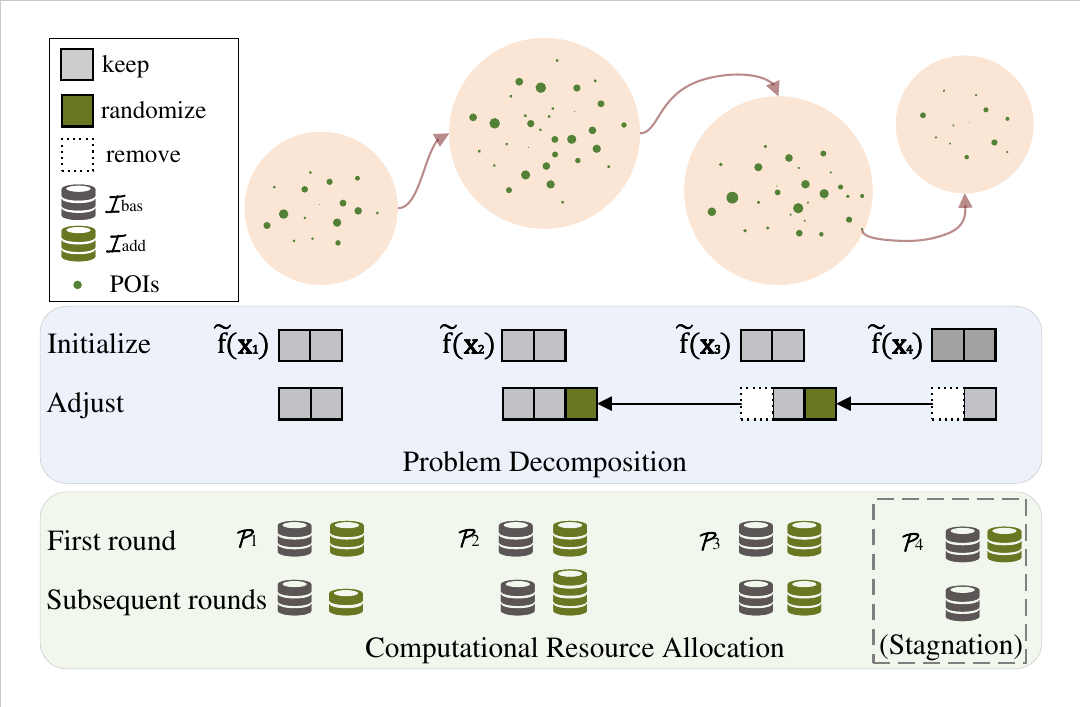}
		\caption{The schematic diagram of problem decomposition and computational resource allocation. Cities are highlighted with an orange background, while green dots within each city represent POIs, with their size indicating the quality of the corresponding POI.}
		\label{fig:resource}
	\end{figure}
	
	\subsection{Computational Resource Allocation Strategy}\label{sec:resource}
	Given a decomposition scheme for large-scale itinerary planning, the next step is to determine the available computational resources for each component to optimize. Considering the imbalance in city tourism resources, we aim to allocate computational resources preferentially towards components representing cities with abundant tourism resources to improve utilization efficiency. To achieve this, we consider both the quality and quantity of the POIs in each component. Specifically, we evaluate the improvement in hypervolume for the \(i\)-th component at the \(k\)-th iteration, denoted as \(\Delta^k_i = \mathcal{C}_{i}^k - \mathcal{C}_{i}^{k-1}\), where \(\mathcal{C}_{i}^k\) represents the hypervolume of the \(i\)-th component at the \(k\)-th iteration. Additionally, we consider the number of POIs \(\mathcal{N}_i\) in each component, which also represents the search bound of each variable for the \(i\)-th component. Therefore, we quantify the optimization potential of each component using \(\mathcal{P}_i^k\):
	
	\begin{equation}\label{eqt:P}
		\mathcal{P}_i^k = (\Delta^k_i + \mathcal{B}^k)\cdot \mathcal{N}_i.
	\end{equation}
	We introduce a balancing coefficient \(\mathcal{B}\) to smooth out the differences in the improvement of hypervolume across components. This is designed to prevent extreme bias due to significant differences in the improvement of hypervolume, which could lead to poor tourism experiences for certain cities. The calculation of \(\mathcal{B}\) is as follows:
	
	\begin{equation}\label{eqt:B}
		\mathcal{B}^k = \frac{1}{m} \sum_{j=1}^{m} \Delta^k_j + \delta,
	\end{equation}
	where \(m\) represents the number of components, and \(\delta\) is a small positive constant to prevent \(\mathcal{B}^k\) from becoming zero.
	
	Due to interactions among components, we adopt a round-robin optimization approach. Computational resources are then allocated according to the optimization potential \(\mathcal{P}_i^k\) of each component. In this study, computational resources refer to the number of fitness evaluations (FEs) allocated to each component, which consists of two parts: the basic fitness evaluations \(\mathcal{I}_{\text{bas}}\) and the additional fitness evaluations \(\mathcal{I}_{\text{add}}\). The former ensures a minimum level of optimization for each component, while the latter allocates computational resources in proportion to the optimization potential of components. To differentiate between stagnant and non-stagnant components, we first define the available computational resources \(\mathcal{I}_{\text{avl}}\) for a non-stagnant component in one round, as follows:
	
	\begin{equation}\label{eqt:avl}
		\mathcal{I}_{\text{avl}} = \mathcal{I}_{\text{bas}} + |U|\mathcal{I}_{\text{add}} \cdot \frac{\mathcal{P}_i}{\sum_{j\in U}{\mathcal{P}_j}},
	\end{equation}
	where \( U \) represents the set of all components that have not stagnated in evolution, and \( |U| \) denotes the cardinality of this set. In each optimization round, the total additional fitness evaluations allocated to all non-stagnant components is given by \( |U| \mathcal{I}_{\text{add}} \), which is then allocated proportionally among them. In contrast, stagnant components retain only the basic fitness evaluations \( \mathcal{I}_{\text{bas}} \). Based on empirical observations, we define a component as stagnant if it satisfies the condition \( \Delta_i^k / \mathcal{C}_{i}^{k-1} < 5 \times 10^{-5} \).
	
	Under the proposed computational resource allocation mechanism, an increase in the number of stagnant components leads to a reduction in the additional fitness evaluations per round, thereby increasing the number of optimization rounds. This is beneficial in contexts where interactions exist between components, as stagnant components gain additional opportunities to escape local optima through updates in the subpopulation of their interacting components.
	
	Figure \ref{fig:resource} illustrates the adopted computational resource allocation scheme. First, the optimization sequence is pre-determined based on the geographical locations of the cities. In the initial optimization round, each component receives an equal amount of computational resources to establish its initial optimization potential \(\mathcal{P}_i^k\), which quantifies both the quantity and quality of POIs in the city. In subsequent rounds, each component is allocated a basic amount of computational resources \(\mathcal{I}_{\text{bas}}\), along with additional resources \(\mathcal{I}_{\text{add}}\) based on its optimization potential \(\mathcal{P}_i^k\), except for stagnant components. The optimization potential \(\mathcal{P}_i^k\) is then updated accordingly.

	%A subproblem is determined to have stagnated in optimization if the following inequality holds:
	%
	%\begin{equation}
	%	\frac{\mathcal{E}_{s_i}^k-\mathcal{E}_{s_i}^{k-1}}{\mathcal{E}_{s_i}^{k-1}}<5\times 10^{-5}
	%\end{equation}
	
	% Subproblem optimization stagnation may be attributed to several reasons, such as the optimization being completed or the subproblem being stuck in a local optimum, with the latter being more likely in most cases. In our framework, each subproblem updates the encoding of the corresponding part in the global encoding after its optimization. When a subproblem stagnates, this update mechanism modifies the encoding around the stagnated subproblem, potentially enabling it to escape the local optimum. Furthermore, when a subproblem stagnates, the total computational resources available for that iteration decrease, resulting in an increase in the total number of iterations, which provides more opportunities for the subproblem to escape the local optimum and achieve better optimization results.
	
	\subsection{Component Optimization and Assembly}
	% The foundation of subproblem collaboration lies in analyzing the attributes of each subproblem and the information generated during the optimization process to determine the relationships between subproblems and apply appropriate collaboration strategies. The algorithm in this paper adopts the following five methods to achieve collaboration between subproblems: (1) Subproblem optimization; (2) Updating the complete solution based on the hypervolume indicator; (3) Evaluating the contribution of each subproblem and dynamically adjusting the structure of the subproblems; (4) Assessing the optimization potential of each subproblem and dynamically allocating computational resources accordingly; (5) Utilizing population inheritance to reduce the loss of optimization information between subproblems.
	Each component is maintained by a subpopulation and evolves upon receiving the allocated computational resources. In this study, we adopt a variant of NSGA-II\cite{deb2002fast} as the optimizer for each component, where the crossover and mutation operators are inspired by\cite{huang2019automatic}. The details are summarized as follows:
	
	\begin{itemize}
		\item \emph{Subpopulation Initialization:} Consider the $i$-th component with size \( d_iM \), where the corresponding set of POIs is \( V_i \). Each POI is assigned a unique identifier from \( 1 \) to \( |V_i| \), and each decision variable takes values from the set \( \{0,1,2,\dots, |V_i|\} \), where 0 indicates that the corresponding variable does not visit any POI. To initialize an individual, \( d_iM \) POIs are randomly selected from \( V_i \) and shuffled. Subsequently, each variable is set to 0 with a probability \( p_z \). This process is repeated until \( n \) individuals are generated, forming the initial subpopulation.
		\item \emph{Mutation:} Given an individual \( \mathbf{x} \} \). The mutation operator is then applied with probability \( p_m \), where a randomly selected variable \( x_j \) from \( \mathbf{x} \) is replaced with a randomly chosen element from the set \( \{0\} \cup (V_i \setminus \mathbf{x}) \), which represents the union of unvisited POIs and the placeholder 0.
		\item \emph{Crossover:} Two parent individuals are randomly selected from the population, and a two-point crossover is applied to generate two offspring. To ensure that no duplicate POIs exist within each offspring, we perform a post-crossover check on the exchanged segments of each offspring. If duplicate POIs are detected, they are corrected through element-wise mutation.
		\item \emph{Solution Assembly:} To estimate the contribution of each component, after obtaining a new subpopulation through iteration, individuals are assembled as segments into a replicated population constructed from the optimal solution, replacing the corresponding variables of the component. The global optimal solution is the candidate solution with the highest hypervolume contribution, and the hypervolume contribution is defined as the metric of the hypervolume formed by the solution's fitness value and the reference point as the diagonal.
	\end{itemize}
	
	\subsection{Overall Framework}
	%Algorithm \ref{alg:cccie} presents the complete procedure of DGCC. During initialization, given \( G = \{V, E\} \), we first verify the decomposability of the vertex set \( V \) based on Theorem \ref{the:weakly_decomp} and initialize a problem decomposition. The optimization potential of each component is initialized to 1. Subsequently, a visiting sequence for the components is determined according to predefined travel preferences. The population \( P = \{P_1, \dots, P_m\} \) is then initialized to align with the problem decomposition.
	Algorithm \ref{alg:cccie} presents the complete procedure of DGCC. During initialization, given \( G = \{V, E\} \), we first verify the decomposability of the vertex set \( V \) based on Theorem \ref{the:weakly_decomp} and initialize a problem decomposition. Next, a visiting sequence for the components is determined according to predefined travel preferences. The optimization potential of each component is initialized to 1. Finally, the population \( P = \{P_1, \dots, P_m\} \) is then initialized to align with the problem decomposition.
	
	In the optimization phase, each component is optimized in a cyclic manner, where the computational resources allocated to each subpopulation \( P_i \) are initially set to \( \mathcal{I}_{\text{bas}} +\mathcal{I}_{\text{add}}\). After each cycle, the optimization potential and available computational resources \( \mathcal{I}_{\text{avl}} \) for each component are updated. Additionally, the component sizes are dynamically adjusted every \(L\) optimization rounds. Finally, when the maximum number of evaluations is reached, the population \( P \) is returned as the obtained set of non-dominated solutions.
	
	\begin{algorithm}[!h]
		\caption{DGCC}
		\label{alg:cccie}
		\begin{algorithmic}[1]
			\State Check the decomposability of graph \( G \);
			\State Initialize problem decomposition;
			\State Initialize the optimal visiting sequence of components;
			\State Initialize the optimization potential \( \mathcal{P} = \text{ones}(m) \);
			\State Randomly initialize the population \( P = \{P_1, ..., P_m\} \);
			\While{FEs \( < \) MaxFEs}
			\For{$i = 1$ \textbf{to} $m$}
			\State Optimize \( P_i \), with fitness evaluations \( \mathcal{I}_{\text{avl},i} \);
			\EndFor
			\State Update the optimization potential \( \mathcal{P} \) and available computational resources \( \mathcal{I}_{\text{avl}} \) for each component;
			\If{$\text{mod}(\text{FEs}, L) = 0$}
			\State Dynamically adjust the component size (Alg. \ref{alg:Decomposition});
			\EndIf
			\EndWhile
		\end{algorithmic}
	\end{algorithm}

	\section{Experiments}
	\label{sec:experiments}
	
	In this section, we first introduce the experimental setup, including the data, comparison algorithms, and parameter settings. Then, we present the results of the four comparison algorithms, followed by a detailed analysis of the results of our proposed algorithm.
	
	\subsection{Experimental Design}
	\noindent \textbf{Test Cases:} Attraction data were collected from the Ctrip website\footnote{[https://www.ctrip.com/]} covering 420 POIs across seven Chinese cities, 420 POIs across seven French cities, and 240 POIs across four German cities. The dataset includes location, popularity, recommended visit duration, ticket price, and other relevant information for each POI. Hotel data for each city were also sourced from Ctrip, while commuting information between locations was obtained via the Geoapify open platform API\footnote{[https://www.geoapify.com/]}.
	
	The experiments comprise 18 test cases, varying by country, number of cities visited, and travel duration, as detailed in Table \ref{test}. The cities are indexed as follows in subsequent analyses:
	\begin{itemize}
		\item China (1–7): Nanjing (1), Changzhou (2), Lianyungang (3), Suzhou (4), Wuxi (5), Xuzhou (6), Yangzhou (7).
		\item France (8–14): Paris (8), Bordeaux (9), Lyon (10), Marseille (11), Nice (12), Mont Saint-Michel (13), Strasbourg (14).
		\item Germany (15–18): Frankfurt (15), Berlin (16), Munich (17), Cologne (18).
	\end{itemize}
	
	\begin{table}[h]
		\centering
		\caption{The description of the test cases}
		\label{test}
		\begin{tabular}{c c c c}
			\toprule
			Cases & Country & City Index & Travel Duration \\
			\midrule
			1 & China & 5,7 & 4 \\
			2 & China & 5,7 & 6 \\
			3 & France & 9,13 & 4 \\
			4 & France & 9,13 & 6 \\
			5 & German &16,18&4\\
			6 & German & 16,18&6\\
			7 & China & 3,6,2 & 6\\
			8 & China & 3,6,2 & 9\\
			9 & France & 10,11,14 & 6\\
			10 & France & 10,11,14 & 9\\
			11 & German &15,16,17 &6\\
			12 & German &15,16,17 &9\\
			13 & China & 1,4,5,7 & 8\\
			14 & China & 1,4,5,7 & 12\\
			15 & France & 8,10,12,13 & 8\\
			16 & France & 8,10,12,13 & 12\\
			17 & German & 15,16,17,18&8\\
			18 & German &15,16,17,18 &12\\
			\bottomrule
		\end{tabular}
	\end{table}
	
	\noindent \textbf{Objective Functions}: For each candidate itinerary \( \textbf{x} = ( x_1, \dots, ) \), we aim to minimize the travel time and travel cost, while maximizing the travel experience across three objectives.
	
	\begin{equation}
		F(x) =
		\begin{cases}
			F_t(x) = \omega \sum\limits_{i=1}^{|\textbf{x}|-1} w_t(x_i, x_{i+1}), \\
			F_c(x) = \omega \left(\sum\limits_{i=1}^{|\textbf{x}|-1} w_c(x_i, x_{i+1}) + \sum\limits_{i=1}^{|\textbf{x}|} w_c(x_i) \right), \\
			F_{e}(x) = \theta \sum\limits_{i=1}^{|\textbf{x}|} \frac{1}{w_e(x_i)}.
		\end{cases}
	\end{equation}
	
	Here, \( w_t(x_i, x_{i+1}) \) denotes the travel time between two POIs. \( w_c(x_i, x_{i+1}) \) represents the travel cost along the route, and the term \( w_c(x_i) \) accounts for the cost of visiting a POI, such as entrance fees. \( w_e(x_i) \) denotes the POI's score, which is derived from its popularity. To transform it into a minimization problem, we take its reciprocal.
	
	The balancing factor \( \omega \) regulates the number of POIs in the itinerary, preventing the algorithm from excessively favoring either a minimal or excessive number of visits. It is defined as:
	%\begin{equation}\label{eqt:omega}
	%    \omega = 1 - \frac{k}{\Gamma + \alpha}
	%\end{equation}
	%where \( \Gamma \) denotes the maximum number of POIs that can be visited, \( k \) is the number of POIs included in the plan (with at least one POI visited per day), and \( \alpha > 0 \) is a control parameter.
	\begin{equation}\label{eqt:omega}
		\omega = 1 - \frac{k}{DM + \alpha}
	\end{equation}
	where \( D \) denotes the total number of travel days, \( M \) is the maximum number of POIs that can be visited per day, \( k \) is the number of POIs included in the solution, and \( \alpha > 0 \) is a control parameter.
	
	Additionally, \( \theta \) is a scaling factor used to adjust the range of the objective value \( F_e \).
	
	\noindent \textbf{Compared Algorithms and Parameter Setting:} We selected four metaheuristic optimization algorithms designed for itinerary planning as compared algorithms, including CCIP \cite{zhang2024cooperative}, AONSGA \cite{xu2021multi}, MOPSO \cite{yan2023multi}, and LSNSGA-II \cite{kolaee2023local}.	
	%\textcolor{red}{(Distinguish between public parameters, such as \( M \), and algorithm-specific private parameters, and provide an explanation for each parameter, such as \( F_{basic}, F_{extra} \))}
	The parameter settings for the comparison algorithms follow those in their original papers. The parameters for our proposed algorithm are as follows:  the maximum number of POIs visited per day \( M = 5 \), particle mutation probability \( p_m = 0.3 \), probability of setting a particle to zero during initialization \( p_z = 0.3 \), control parameter $\alpha=0.8$, and the population size for each component \( n = 100 \), the basic fitness evaluations $\mathcal{I}_{\text{bas}}$ and  the additional fitness evaluations $\mathcal{I}_{\text{add}}$ are set such that $\mathcal{I}_{\text{bas}} + \mathcal{I}_{\text{add}} = 20n$ and $\mathcal{I}_{\text{bas}} /   \mathcal{I}_{\text{add}} = 1$,scaling factor $\theta=10000$. The maximum number of evaluations for all algorithms is set to \( 30000m + 5000D \), where \( m \) is the number of cities and \( D \) is the total number of travel days. The algorithm terminates once the maximum number of evaluations is reached.
	
	\noindent \textbf{Evaluation Metric:} The HV indicator is used to evaluate the quality of the Pareto front after the same number of evaluations. A higher HV value indicates a higher quality of the Pareto solution set. Due to the significant differences in fitness values among different test cases, each group uses different reference points. The results for each group are the averages of 50 test runs.
	
	\subsection{Experimental Results}
	
	\begin{table*}[ht]
		\centering
		\caption{The mean and standard deviation of HV obtained by DGCC and four compared algorithms on 18 test cases. }
		\label{tab:hv_result}
		\renewcommand{\arraystretch}{1.2} % 调整行距
		\resizebox{0.97\textwidth}{!}{\begin{tabular}{lccccc}
				\toprule
				Cases & DGCC & CCIP & LSNSGA-II & MOPSO & AONSGA-II \\
				\midrule
				1  & \textbf{6.632e+07 ± 1.231e+06} & 6.251e+07 ±  1.276e+06 (++) & 5.793e+07  ±3.090e+06 (++) & 5.151e+07 ± 3.316e+06 (++) &4.180e+07 ±  1.645e+06 (++) \\
				2  & \textbf{1.266e+08 ±  2.896e+06} &  1.143e+08  ±  1.997e+06 (++) &  8.989e+07  ±  6.553e+06 (++) & 8.295e+07  ± 9.231e+06 (++) & 6.925e+07 ± 2.831e+06 (++) \\
				3  & \textbf{2.500e+10 ± 7.272e+08} & 1.920e+10 ± 3.089e+08 (++) & 1.906e+10 ± 1.599e+08 (++) & 1.857e+10 ± 1.044e+09 (++) & 9.125e+09 ± 8.209e+08 (++) \\
				4  & \textbf{6.279e+10 ± 8.828e+08} & 4.327e+10 ± 5.463e+08 (++) & 4.304e+10 ± 1.127e+09 (++) & 4.117e+10 ± 1.673e+09 (++) & 1.221e+10 ± 1.605e+09 (++) \\
				5  & \textbf{2.694e+10 ± 1.726e+08} & 2.398e+10 ± 1.385e+09 (++) & 2.220e+10 ± 2.172e+09 (++) & 1.908e+10 ± 2.153e+09 (++) & 1.319e+10 ± 1.567e+09 (++) \\
				6  & \textbf{1.030e+11 ± 2.161e+09} & 8.990e+10 ± 3.444e+09 (++) & 7.191e+10 ± 6.747e+09 (++) & 6.269e+10 ± 6.375e+09 (++) & 3.848e+10 ± 3.682e+09 (++) \\
				7  & \textbf{2.541e+09 ± 3.339e+07} & 2.339e+09 ± 4.956e+07 (++) & 2.059e+09 ± 1.390e+08 (++) & 1.803e+09 ± 1.656e+08 (++) & 1.492e+09 ± 5.747e+07 (++) \\
				8  & \textbf{8.871e+09 ± 1.485e+08} & 8.428e+09 ± 1.599e+08 (++) & 5.700e+09 ± 4.411e+08 (++) & 5.208e+09 ± 5.693e+08 (++) & 4.018e+09 ± 3.317e+08 (++) \\
				9  & \textbf{1.585e+11 ± 5.336e+09} & 1.292e+11 ± 8.245e+09 (++) & 1.064e+11 ± 7.204e+09 (++) & 8.687e+10 ± 7.204e+09 (++) & 3.032e+10 ± 7.204e+09 (++) \\
				10 & \textbf{4.547e+11 ± 2.383e+10} & 3.765e+11 ± 1.291e+10 (++) & 2.650e+11 ± 1.773e+10 (++) & 2.289e+11 ± 1.920e+10 (++) & 7.243e+10 ± 1.195e+10 (++) \\
				11 & \textbf{2.517e+11 ± 4.367e+09} & 2.307e+11 ± 9.628e+09 (++) & 1.961e+11 ± 1.621e+10 (++) & 1.669e+11 ± 1.696e+10 (++) & 9.774e+10 ± 9.001e+09 (++) \\
				12 & \textbf{6.756e+11 ± 1.220e+10} & 6.002e+11 ± 2.330e+10 (++) & 4.304e+11 ± 3.027e+10 (++) & 3.700e+11 ± 3.514e+10 (++) & 1.493e+11 ± 1.800e+10 (++) \\
				13 & \textbf{3.747e+08 ± 4.943e+06} & 3.412e+08 ± 5.874e+06 (++) & 2.833e+08 ± 2.139e+07 (++) & 2.601e+08 ± 1.774e+07 (++) & 1.955e+08 ± 1.865e+07 (++) \\
				14 & \textbf{8.343e+08 ± 1.275e+07} & 7.948e+08 ± 1.251e+07 (++) & 5.446e+08 ± 4.203e+07 (++) & 4.946e+08 ± 4.553e+07 (++) & 2.809e+08 ± 3.051e+07 (++) \\
				15 & \textbf{1.815e+11 ± 4.326e+09} & 1.515e+11 ± 6.209e+09 (++) & 1.523e+11 ± 8.994e+09 (++) & 1.440e+11 ± 8.284e+09 (++) & 4.682e+10 ± 4.879e+09 (++) \\
				16 & \textbf{8.271e+11 ± 3.034e+10} & 6.797e+11 ± 1.900e+10 (++) & 6.339e+11 ± 3.135e+10 (++) & 5.999e+11 ± 3.319e+10 (++) & 1.819e+11 ± 1.864e+10 (++) \\
				17 & \textbf{4.851e+11 ± 5.611e+09} & 4.221e+11 ± 2.512e+10 (++) & 3.317e+11 ± 2.264e+10 (++) & 2.699e+11 ± 3.338e+10 (++) & 1.112e+11 ± 1.359e+10 (++) \\
				18 & \textbf{1.169e+12 ± 2.355e+10} & 9.914e+11 ± 4.922e+10 (++) & 5.860e+11 ± 5.145e+10 (++) & 4.931e+11 ± 4.910e+10 (++) & 1.873e+11 ± 1.695e+10 (++) \\
				\midrule
				+/-/$\approx$ &  & 18/0/0 & 18/0/0 & 18/0/0 & 18/0/0 \\
				Rank & 1.000 & 2.056 & 2.944 & 4.000 & 5.000 \\
				%\textbf{RIR (\%)} & -- & 15.96\% & 42.37\% & 60.74\% & 246.93\% \\
				\bottomrule
		\end{tabular}}
	\end{table*}
	
	Table \ref{tab:hv_result} presents the experimental results of the proposed DGCC and the four compared algorithms, where the best result for each test case is highlighted in bold. Statistical significance is assessed using the Wilcoxon signed-rank test, where \texttt{++}/\texttt{- -} denote highly significant improvements/deteriorations ($p < 0.01$), \texttt{+}/\texttt{-} indicate moderate significance ($0.01 \leq p < 0.05$), and $\approx$ represents no statistically significant difference ($p \geq 0.05$ or negligible mean difference). In addition, the Friedman test is conducted to evaluate the overall statistical significance across all compared algorithms. Overall, the results demonstrate that DGCC outperforms all other algorithms across all test cases, indicating its strong competitiveness. Specifically, both DGCC and CCIP employ a divide-and-conquer-based CC framework, achieving superior performance compared to LSNSGA-II, MOPSO, and AONSGA-II. Furthermore, within the same country, the performance improvement of DGCC becomes more pronounced as the number of cities in the test case increases. For instance, in the six test cases for China, i.e., cases 1, 2 (two cities), cases 7, 8 (three cities), and cases 13, 14 (four cities), the advantage of the CC framework becomes increasingly evident, further supporting its effectiveness in large-scale itinerary planning. Considering test cases within the same country and city but with varying travel durations, we observe that DGCC exhibits greater performance gains for longer travel durations. Since our encoding scheme closely links travel duration to problem dimensionality, this result further validates the scalability of DGCC. Finally, when comparing the same algorithm across different countries, we note significant variations in performance magnitudes. This suggests that results across countries are not directly comparable, as factors such as the geographical distribution of POIs influence the outcomes.
	
	\subsection{Ablation Study}
	DGCC consists of three core parts, i.e., component structure dynamic adjustment, computational resource allocation, and population inheritance. In this section, we conduct an ablation study to analyze the impact of each part on the algorithm’s performance. The experimental results are presented in Table \ref{tab:ablation_hv}.
	\begin{itemize}
		\item \emph{w/o structure adjustment} refers to the version of DGCC that does not utilize the component structure dynamic adjustment strategy. In this case, the component structure remains fixed throughout the evolutionary process, with variables uniformly distributed across all components, and the structure of the components remains identical.
		\item \emph{w/o resource allocation} refers to the version where the dynamic computational resource allocation strategy is not applied. Here, each component is assigned an equal amount of computational resources.
		\item \emph{w/o population inheritance} refers to the version of DGCC that does not use the population inheritance strategy. In this case, after each iteration of the main loop, new component populations are initialized randomly.
	\end{itemize}
	
	\begin{table*}[h]
		\centering
		\caption{The mean and standard deviation of HV about the ablation study of DGCC on 18 test cases.}
		\label{tab:ablation_hv}
		\renewcommand{\arraystretch}{1.2}
		\begin{tabular}{c c c c c}
			\toprule
			Case & DGCC & w/o structure adjustment & w/o resource allocation & w/o population inheritance \\
			\midrule
			1 & \textbf{1.812e+06 ± 4.904e+04} & 1.761e+06 ± 4.137e+04 & 1.809e+06 ± 6.137e+04 & 1.396e+06 ± 5.886e+04 \\
			2  & \textbf{4.736e+06 ± 2.151e+05} & 4.525e+06 ± 1.824e+05 & 4.705e+06 ± 1.995e+05 & 3.040e+06 ± 2.209e+05 \\
			3  & \textbf{5.963e+09 ± 2.292e+08} & 4.537e+09 ± 3.976e+08 & 5.840e+09 ± 2.825e+08 & 4.148e+09 ± 2.450e+08 \\
			4  & \textbf{2.150e+10 ± 4.201e+08} & 1.647e+10 ± 7.287e+08 & 2.111e+10 ± 4.707e+08 & 1.420e+10 ± 4.289e+08 \\
			5  & \textbf{1.421e+09 ± 5.328e+07} & 1.411e+09 ± 7.589e+07 & 1.409e+09 ± 5.872e+07 & 1.221e+09 ± 1.083e+08 \\
			6  & \textbf{7.829e+09 ± 3.546e+08} & 7.451e+09 ± 6.230e+08 & 7.809e+09 ± 3.583e+08 & 5.612e+09 ± 7.196e+08 \\
			7  & \textbf{5.872e+08 ± 2.878e+07} & 5.827e+08 ± 1.834e+07 & 5.815e+08 ± 2.618e+07 & 4.087e+08 ± 1.725e+07 \\
			8  & \textbf{1.178e+09 ± 6.060e+07} & 1.165e+09 ± 6.007e+07 & 1.164e+09 ± 5.199e+07 & 7.082e+08 ± 5.086e+07 \\
			9  & \textbf{2.053e+10 ± 1.042e+09} & 1.884e+10 ± 8.867e+08 & 1.933e+10 ± 1.270e+09 & 1.311e+10 ± 1.479e+09 \\
			10 & \textbf{8.364e+10 ± 5.567e+09} & 8.045e+10 ± 4.500e+09 & 8.234e+10 ± 6.173e+09 & 4.198e+10 ± 4.980e+09 \\
			11 & \textbf{3.988e+09 ± 1.264e+08} & 3.917e+09 ± 2.430e+08 & 3.968e+09 ± 1.310e+08 & 2.923e+09 ± 3.622e+08 \\
			12 & \textbf{2.873e+10 ± 1.302e+09} & 2.624e+10 ± 2.255e+09 & 2.747e+10 ± 1.923e+09 & 1.523e+10 ± 2.253e+09 \\
			13 & \textbf{8.580e+06 ± 5.996e+05} & 8.399e+06 ± 2.837e+05 & 8.452e+06 ± 2.047e+05 & 5.256e+06 ± 2.921e+05 \\
			14 & \textbf{1.550e+07 ± 6.906e+05} & 1.501e+07 ± 6.101e+05 & 1.521e+07 ± 6.948e+05 & 8.045e+06 ± 4.706e+05 \\
			15 & \textbf{1.435e+10 ± 8.661e+08} & 1.281e+10 ± 7.763e+08 & 1.370e+10 ± 1.179e+09 & 1.032e+10 ± 6.303e+08 \\
			16 & \textbf{5.130e+10 ± 4.493e+09} & 4.439e+10 ± 1.701e+09 & 4.821e+10 ± 4.093e+09 & 2.804e+10 ± 2.895e+09 \\
			17 & \textbf{1.148e+10 ± 4.258e+08} & 1.130e+10 ± 4.272e+08 & 1.133e+10 ± 4.655e+08 & 8.512e+09 ± 8.794e+08 \\
			18 & \textbf{6.706e+10 ± 4.022e+09} & 6.446e+10 ± 3.882e+09 & 6.540e+10 ± 3.389e+09 & 3.368e+10 ± 5.373e+09 \\
			\bottomrule
		\end{tabular}
	\end{table*}
	
	Overall, the comparison experimental results indicate that DGCC outperforms the other three algorithms in all test cases, demonstrating that each component contributes to the overall performance. Specifically, the population inheritance mechanism has the greatest impact, followed by the component structure adjustment and computational resource allocation strategies.
	
	The experimental results without population inheritance are the poorest. This is because, after the dynamic adjustment of the component structure, randomly restarting the subpopulations results in the loss of a significant amount of historical information, leading to a waste of many evaluations. By employing the population inheritance strategy and retaining the subpopulations that have not been adjusted, the loss of optimization knowledge is effectively minimized, thereby improving overall optimization performance.
	
	The impacts of DGCC without structure adjustment and without resource allocation are similar, as these two strategies are interdependent. On the one hand, when the component structure is identical, the contribution of each component is insufficient to reflect the differences in component attributes. Specifically, although there are differences in tourism resources across cities, the identical component structure imposes an upper limit on the number of visitable POIs in each city. The primary differences in tourism resources between cities are sources from the number of POIs, while the differences in the quality of representative POIs within each city are relatively small. However, this difference in POI quality is the main source of component contribution quantification in DGCC without structure adjustment. The inaccuracy in contribution assessment also renders the subsequent computational resource allocation strategy inefficient.
	
	On the other hand, when the computational resources allocated to the components are uniform, the structural adjustments of the components are conservative. First, high-value components are not allocated enough computational resources to optimize effectively, which hampers the algorithm's ability to assess their value and delays the structural adjustment of the components. Second, after the structure of the components is adjusted, if the computational resources allocated to them remain the same as those allocated to other components, the evaluation of the contribution of each basic encoding unit becomes inaccurate. For instance, when the number of variables assigned to a component increases, the contribution of each basic encoding unit may decrease due to the increased optimization difficulty, leading to erroneous decisions in subsequent iterations.
	
	\subsection{Parameter Analysis}
	
	In this section, we focus on two parameters in the DGCC algorithm, i.e., the component structure adjustment frequency \( L \) and the available computational resources \( Q \) per iteration. These two parameters respectively influence the key parts of component structure dynamic adjustment and computational resource allocation.
	
	\subsubsection{Structure Adjustment Period}
	
	The structure adjustment period \( L \) determines how frequently the component structure is readjusted, occurring every \( L \) iterations. Given a fixed number of fitness evaluations, each iteration consists of approximately 20 generations, and the algorithm runs for about 30 iterations in total. To analyze the impact of \( L \), we evaluate the normalized HV across different values of \( L = \{1,2, \dots, 30\} \) on three test cases from China, France, and Germany.
	
	As shown in Figure \ref{fig:L_HV}, the HV curves for the three test cases generally exhibit an initial upward trend followed by a decline, with most peaks occurring in the range of \( L = 6 \) to \( 11 \). Since the number of structural adjustments in the algorithm is inversely proportional to \( L \), the results indicate that the algorithm performs best when the adjustment frequency is moderate. When the adjustment frequency is too high (e.g., \( L < 6 \)), frequent structural changes disrupt the optimization process, as the corresponding subpopulations must be restarted after each adjustment, leading to a loss of optimization information and degraded performance. Conversely, when the adjustment frequency is too low (e.g., \( L > 11 \)), optimization effectiveness also deteriorates significantly. This suggests that infrequent adjustments waste computational resources on suboptimal structures, resulting in inferior performance. Notably, when \( L > 15 \), the component structure is adjusted only once throughout the optimization process, and the HV curves for all three test cases exhibit a declining trend, indicating that earlier structural adjustments are beneficial. Although fluctuations appear in the HV curve for the Germany test case when \( L > 15 \), the overall conclusion remains consistent: appropriately balancing the frequency of component structure adjustments is crucial for achieving optimal performance.
	
	\begin{figure}[ht]
		\centering
		\includegraphics[width=0.5\textwidth]{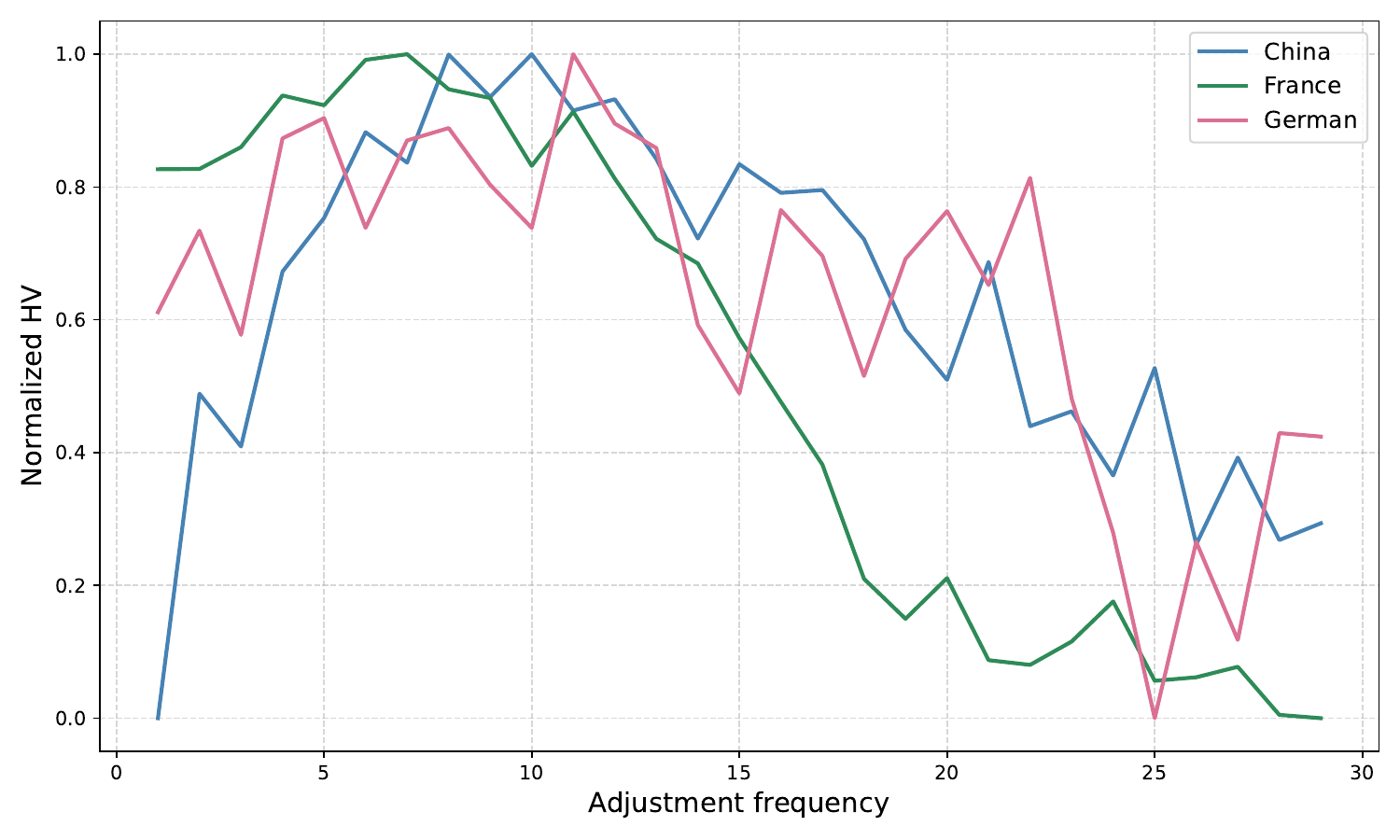}
		\caption{The variation curve of normalized HV with respect to the structure adjustment period \(L\) in three test cases from China, France, and Germany.}
		\label{fig:L_HV}
	\end{figure}
	
	\subsubsection{Available Computational Resources}
	
	The available computational resource $Q = (\mathcal{I}_{\text{bas}} + \mathcal{I}_{\text{add}})/n$ represents the average computational budget allocated to each component in one iteration. A larger \( Q \) value increases the computational resources allocated per iteration but reduces the total number of iterations, thereby decreasing the frequency of information exchange between components. To analyze the impact of \( Q \), we evaluate the normalized HV across different values of \( Q = \{10, 20, \dots, 200\} \) on the same three test cases from China, France, and Germany.
	
	As shown in Figure \ref{fig:res_HV}, the HV metric decreases significantly as \( Q \) increases. This decline occurs because a larger \( Q \) reduces the total number of iterations, thereby limiting the frequency of information exchange between components. Moreover, the experimental results support the necessity of frequently switching component optimization. This is primarily due to the adoption of component-wise isolated evaluation, where each component is evaluated independently with high accuracy, minimizing the impact of interactions. Unlike global evaluation, which requires longer generations to mitigate inaccuracies caused by interactions, our approach benefits from frequent evaluations without such interference. Overall, the results demonstrate that in DGCC, increasing the frequency of component information exchange enhances the algorithm's performance.

	%This section primarily analyzes the impact of the total computational resource \(Q\) per iteration on the hypervolume (HV) metric of the algorithm's results. \(Q\) represents the total computational resources allocated to all subproblems in a single iteration. A larger \(Q\) value increases the computational resources per iteration but reduces the total number of iterations, thereby decreasing the frequency of information exchange between subproblems. When the total computational resource per iteration is set to \(K \times 10\) (where \(K\) is the number of subproblems), the total number of iterations is approximately 60.
	
	%We conducted three sets of tests: for the China test group, we selected cities 4 and 6, with a duration of 6 days; for the France test group, we selected cities 10 and 13, with a duration of 6 days; and for the Germany test group, we selected cities 16 and 18, with a duration of 6 days.The range of \(Q\) varies from 10 to 200, with a step size of 10. For each value of \(Q\), 10 test runs are performed, and the average HV value is recorded.
	\begin{figure}[ht]
		\centering
		\includegraphics[width=0.5\textwidth]{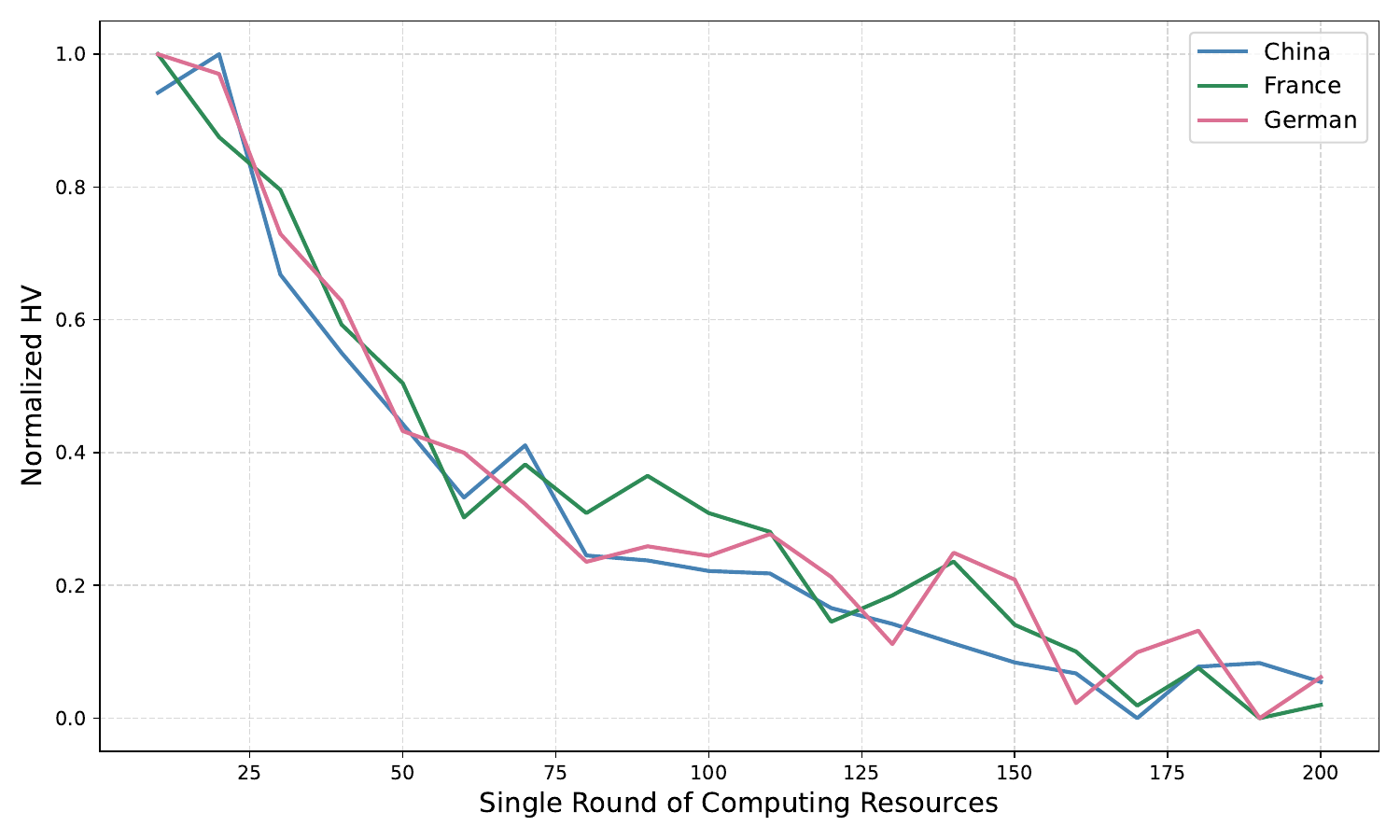}
		\caption{The variation curve of normalized HV with respect to the available computational resource \( Q \) in three test cases from China, France, and Germany.}
		\label{fig:res_HV}
	\end{figure}
	
	\subsection{Qualitative Analysis}
	
	%\begin{table}[ht]
	%	\centering
	%	\label{CCIP2_case}
	%	\caption{CCIP2}
	%	\begin{tabular}{|c|c|c|}
		%		\hline
		%		\textbf{Day} & \textbf{City} & \textbf{POI} \\ \hline
		%		1            & Nanjing       & Sun Yat-sen Mausoleum Scenic Area \\ \hline
		%		1            & Nanjing       & Nanjing Museum                   \\ \hline
		%		1            & Nanjing       & Xinjiekou                        \\ \hline
		%		1            & Nanjing       & Chaotian Palace                  \\ \hline
		%		1            & Nanjing       & Mochou Lake Scenic Area          \\ \hline
		%		2            & Nanjing       & Qinhuai River                    \\ \hline
		%		2            & Nanjing       & Wuyi Lane                        \\ \hline
		%		2            & Nanjing       & Presidential Palace              \\ \hline
		%		2            & Nanjing       & Jiming Temple                    \\ \hline
		%		3            & Nanjing       & Laomendong Historical Block      \\ \hline
		%		3            & Nanjing       & Shiziqiao Food Street            \\ \hline
		%		3            & Nanjing       & Shiziqiao Food Street            \\ \hline
		%		4            & Wuxi          & Mudu Ancient Town                \\ \hline
		%		4            & Wuxi          & Master of the Nets Garden        \\ \hline
		%		4            & Wuxi          & Lion Grove Garden                \\ \hline
		%		4            & Wuxi          & Hanshan Temple                   \\ \hline
		%		4            & Wuxi          & Canglang Pavilion                \\ \hline
		%	\end{tabular}
	%\end{table}
	\begin{table*}[ht]
		\centering
		\caption{Itineraries for DGCC, CCIP, and LSNSGA-II}
		\label{combined_itinerary}
		\begin{tabular}{c c c p{12cm}}
			\toprule
			Algorithms & Day & City  & Points of Interest \\ \midrule
			\multirow{4}{*}{\raisebox{-9ex}{DGCC}} & \multirow{2}{*}{1} & \multirow{2}{*}{Nanjing}  & Sun Yat-sen Mausoleum Scenic Area $\to$ Nanjing Museum $\to$ Nanjing Ming Xiaoling Mausoleum Park $\to$ Laomen East Historical District $\to$ Wuyi Lane \\
			\cmidrule(lr){2-4}
			& \multirow{2}{*}{2} & \multirow{2}{*}{Nanjing} & Yihe Road Mansion District $\to$ Gulou (Drum Tower) $\to$ Shiziqiao Food Street $\to$ Xinjiekou (New Street) $\to$ Jiming Temple \\
			\cmidrule(lr){2-4}
			& 3 & Nanjing & Zhan Garden $\to$ Nanjing University $\to$ Qinhuai River \\
			\cmidrule(lr){2-4}
			& \multirow{2}{*}{4} & \multirow{2}{*}{Wuxi}    & Wangshi Garden $\to$ Zhuozheng Garden $\to$ Hanshan Temple $\to$ China Dinosaurs Park $\to$ Lion Grove Garden \\
			\midrule
			
			\multirow{4}{*}{\raisebox{-5ex}{CCIP}}  & 1 & Nanjing & Drum Tower $\to$ Jiming Temple $\to$ Xinjiekou $\to$ Mochou Lake Scenic Area \\
			\cmidrule(lr){2-4}
			& 2 & Nanjing & Qinhuai River $\to$ Laomendong Historical Block $\to$ Ming Palace Ruins Park $\to$ Nanjing Museum \\
			\cmidrule(lr){2-4}
			& 3 & Wuxi    & Lingering Garden $\to$ Hanshan Temple $\to$ Guanqian Street \\
			\cmidrule(lr){2-4}
			& 4 & Wuxi    & Suzhou Museum $\to$ Couple's Retreat Garden $\to$ Pingjiang Historical Street \\
			\midrule
			
			\multirow{4}{*}{\raisebox{-11ex}{LSNSGA-II}} & \multirow{2}{*}{1} & \multirow{2}{*}{Nanjing} & Dacheng Hall of Confucius Temple $\to$ Qinhuai Scenic Area of Confucius Temple $\to$ China Imperial Examination Museum (Jiangnan Gongyuan) $\to$ Qinhuai River Painting Boat Ride $\to$ Wuyi Lane \\
			\cmidrule(lr){2-4}
			& \multirow{2}{*}{2} & \multirow{2}{*}{Wuxi}    & Couple's Retreat Garden $\to$ Eslite Bookstore $\to$ Guanqian Street $\to$ Jinji Lake $\to$ Suzhou Amusement Park Forest World \\
			\cmidrule(lr){2-4}
			& 3 & Nanjing & Mochou Lake Scenic Area $\to$ Confucius Temple $\to$ Zhan Garden \\
			\cmidrule(lr){2-4}
			& \multirow{2}{*}{4} & \multirow{2}{*}{Nanjing} & Chaotian Palace $\to$ Zhonghua Gate Castle $\to$ Ruins of the Great Bao’e Temple Scenic Area $\to$ Nanjing City Wall $\to$ Yuyuan Garden \\
			\bottomrule
		\end{tabular}
	\end{table*}
	
	In this set of experiments, we conducted a qualitative analysis of real-world cases for the DGCC algorithm and the two best-performing algorithms, CCIP and LSNSGA-II. We used these three algorithms to plan a four-day itinerary to Nanjing and Wuxi, and selected representative cases for analysis. The itineraries generated by the three algorithms are summarized in Table~\ref{combined_itinerary}.
	
	The results indicate that LSNSGA-II, which is designed for small-scale itinerary planning, produced an itinerary where the trip starts in Nanjing, moves to Wuxi, and then returns to Nanjing. This is evidently impractical, as it introduces unnecessary intercity transitions. The issue arises because LSNSGA-II lacks the ability to distinguish between different cities in multi-city planning, leading to inefficient travel routes. Both DGCC and CCIP are algorithms designed for large-scale itinerary planning. DGCC allocated three days in Nanjing and one day in Wuxi, while CCIP allocated two days each in Nanjing and Wuxi. Since Nanjing's POIs outperform those in Wuxi, DGCC allocated more time to Nanjing, thereby improving the overall plan quality. CCIP lacks a dynamic structure adjustment mechanism and maintains the initial equal allocation of days, which fails to address the significant differences in tourism resources between cities effectively. Overall, DGCC demonstrated greater advantages in real-world cases compared to CCIP and LSNSGA-II.
	
	\section{Conclusion}
	\label{sec: Conclusion}
	This paper proposes a multi-objective cooperative coevolutionary algorithm, termed DGCC, to address large-scale itinerary planning problems. We first analyze the problem’s decomposability, deriving a weak decomposability condition from a necessary condition for strict decomposability and identifying the graph structures that satisfy this condition. Based on this foundation, we design a dynamic problem decomposition and computational resource allocation strategy to handle component imbalance and interactions in large-scale itinerary planning. Finally, we conduct experiments on 18 test cases, and the results demonstrate that the proposed DGCC algorithm significantly outperforms several state-of-the-art itinerary planning algorithms, including CCIP, LSNSGA-II, AONSGA-II, and MOPSO, with performance advantages increasing as problem scale grows. Moreover, the qualitative analysis further confirms that DGCC generates more reasonable travel itineraries. In future work, we will further explore more relaxed decomposability conditions and conduct an in-depth analysis of the relationship between the strength of component interactions and the effectiveness of cooperative coevolution.
	
	\bibliographystyle{IEEEtran}
	\bibliography{ref}

% Generated by IEEEtran.bst, version: 1.14 (2015/08/26)
\begin{thebibliography}{10}
\providecommand{\url}[1]{#1}
\csname url@samestyle\endcsname
\providecommand{\newblock}{\relax}
\providecommand{\bibinfo}[2]{#2}
\providecommand{\BIBentrySTDinterwordspacing}{\spaceskip=0pt\relax}
\providecommand{\BIBentryALTinterwordstretchfactor}{4}
\providecommand{\BIBentryALTinterwordspacing}{\spaceskip=\fontdimen2\font plus
\BIBentryALTinterwordstretchfactor\fontdimen3\font minus
  \fontdimen4\font\relax}
\providecommand{\BIBforeignlanguage}[2]{{%
\expandafter\ifx\csname l@#1\endcsname\relax
\typeout{** WARNING: IEEEtran.bst: No hyphenation pattern has been}%
\typeout{** loaded for the language `#1'. Using the pattern for}%
\typeout{** the default language instead.}%
\else
\language=\csname l@#1\endcsname
\fi
#2}}
\providecommand{\BIBdecl}{\relax}
\BIBdecl

\bibitem{gunawan2016orienteering}
A.~Gunawan, H.~C. Lau, and P.~Vansteenwegen, ``Orienteering problem: A survey
  of recent variants, solution approaches and applications,'' \emph{European
  Journal of Operational Research}, vol. 255, no.~2, pp. 315--332, 2016.

\bibitem{vansteenwegen2011orienteering}
P.~Vansteenwegen, W.~Souffriau, and D.~Van~Oudheusden, ``The orienteering
  problem: A survey,'' \emph{European Journal of Operational Research}, vol.
  209, no.~1, pp. 1--10, 2011.

\bibitem{rani2018development}
S.~Rani, K.~N. Kholidah, and S.~N. Huda, ``A development of travel itinerary
  planning application using traveling salesman problem and k-means clustering
  approach,'' in \emph{Proceedings of the 2018 7th International Conference on
  Software and Computer Applications}, 2018, pp. 327--331.

\bibitem{chen2022automatic}
B.-H. Chen, J.~Han, S.~Chen, J.-L. Yin, and Z.~Chen, ``Automatic itinerary
  planning using triple-agent deep reinforcement learning,'' \emph{IEEE
  Transactions on Intelligent Transportation Systems}, vol.~23, no.~10, pp.
  18\,864--18\,875, 2022.

\bibitem{yu2015anytime}
J.~Yu, J.~Aslam, S.~Karaman, and D.~Rus, ``Anytime planning of optimal
  schedules for a mobile sensing robot,'' in \emph{2015 IEEE/RSJ International
  Conference on Intelligent Robots and Systems (IROS)}.\hskip 1em plus 0.5em
  minus 0.4em\relax IEEE, 2015, pp. 5279--5286.

\bibitem{bagloee2017multi}
S.~A. Bagloee, M.~Tavana, D.~Di~Caprio, M.~Asadi, and M.~Heshmati, ``A
  multi-user decision support system for online city bus tour planning,''
  \emph{Journal of Modern Transportation}, vol.~25, pp. 59--73, 2017.

\bibitem{li2024research}
B.~Li, K.~Zhang, Y.~Sun, and J.~Zou, ``Research on travel route planning
  optimization based on large language model,'' in \emph{2024 6th International
  Conference on Data-driven Optimization of Complex Systems (DOCS)}.\hskip 1em
  plus 0.5em minus 0.4em\relax IEEE, 2024, pp. 352--357.

\bibitem{ghobadi2023integrated}
F.~Ghobadi, A.~Divsalar, H.~Jandaghi, and R.~B. Nozari, ``An integrated
  recommender system for multi-day tourist itinerary,'' \emph{Applied Soft
  Computing}, vol. 149, p. 110942, 2023.

\bibitem{gonzalez2023recommendation}
C.~Gonz{\'a}lez-Navasa, J.~A. Moreno-P{\'e}rez, J.~Brito-Santana, and
  H.~Alonso-Afonso, ``Recommendation of tourist itineraries with dependence on
  transport time,'' \emph{Transportation Research Procedia}, vol.~71, pp.
  77--84, 2023.

\bibitem{castillo2008samap}
L.~Castillo, E.~Armengol, E.~Onaind{\'\i}a, L.~Sebasti{\'a},
  J.~Gonz{\'a}lez-Boticario, A.~Rodr{\'\i}guez, S.~Fern{\'a}ndez, J.~D. Arias,
  and D.~Borrajo, ``Samap: An user-oriented adaptive system for planning
  tourist visits,'' \emph{Expert Systems with Applications}, vol.~34, no.~2,
  pp. 1318--1332, 2008.

\bibitem{wang2024research}
G.~Wang, Y.~Wang, and W.~Li, ``Research on travel route planing problems based
  on greedy algorithm,'' in \emph{2024 4th International Conference on
  Electronic Information Engineering and Computer Science (EIECS)}.\hskip 1em
  plus 0.5em minus 0.4em\relax IEEE, 2024, pp. 755--758.

\bibitem{yan2023multi}
J.~Yan, ``Multi-objective optimal tourism route planning and design based on
  improved particle swarm optimization algorithm,'' in \emph{International
  Conference on Cognitive based Information Processing and Applications}.\hskip
  1em plus 0.5em minus 0.4em\relax Springer, 2023, pp. 303--312.

\bibitem{chen2023application}
S.-T. Chen, T.-H. Wu, R.-J. Ye, L.-C. Lee, W.-Y. Huang, Y.-H. Lin, and B.-Y.
  Wang, ``Application of ant colony optimization computing to a recommended
  travel itinerary planning system with repeatedly used nodes,'' \emph{Applied
  Sciences}, vol.~13, no.~24, p. 13221, 2023.

\bibitem{ruiz2022systematic}
J.~Ruiz-Meza and J.~R. Montoya-Torres, ``A systematic literature review for the
  tourist trip design problem: Extensions, solution techniques and future
  research lines,'' \emph{Operations Research Perspectives}, vol.~9, p. 100228,
  2022.

\bibitem{xu2021constraint}
P.~Xu, W.~Luo, X.~Lin, J.~Zhang, Y.~Qiao, and X.~Wang, ``Constraint-objective
  cooperative coevolution for large-scale constrained optimization,'' \emph{ACM
  Transactions on Evolutionary Learning and Optimization}, vol.~1, no.~3, pp.
  1--26, 2021.

\bibitem{omidvar2015designing}
M.~N. Omidvar, X.~Li, and K.~Tang, ``Designing benchmark problems for
  large-scale continuous optimization,'' \emph{Information Sciences}, vol. 316,
  pp. 419--436, 2015.

\bibitem{zhang2024cooperative}
Z.~Zhang, P.~Xu, Z.~Wang, and W.~Luo, ``Cooperative coevolution for cross-city
  itinerary planning,'' in \emph{International Conference on Intelligent
  Information Processing}.\hskip 1em plus 0.5em minus 0.4em\relax Springer,
  2024, pp. 382--391.

\bibitem{srivastava1969generalized}
S.~Srivastava, S.~Kumar, R.~Garg, and P.~Sen, ``Generalized traveling salesman
  problem through n sets of nodes,'' \emph{CORS journal}, vol.~7, no.~2, p.~97,
  1969.

\bibitem{chisman1975clustered}
J.~A. Chisman, ``The clustered traveling salesman problem,'' \emph{Computers \&
  Operations Research}, vol.~2, no.~2, pp. 115--119, 1975.

\bibitem{brilhante2015planning}
I.~R. Brilhante, J.~A. Macedo, F.~M. Nardini, R.~Perego, and C.~Renso, ``On
  planning sightseeing tours with tripbuilder,'' \emph{Information Processing
  \& Management}, vol.~51, no.~2, pp. 1--15, 2015.

\bibitem{jean2005challenges}
G.~Jean-Marc, ``Challenges in real world sightseeing tour optimization using
  meta-heuristics,'' in \emph{Proceedings of the 6th WSEAS International
  Conference on Evolutionary Computing}, 2005, pp. 233--238.

\bibitem{kolaee2023local}
M.~H. Kolaee, S.~M. J.~M. Al-e, A.~Jabbarzadeh \emph{et~al.}, ``A local
  search-based non-dominated sorting genetic algorithm for solving a
  multi-objective medical tourism trip design problem considering the
  attractiveness of trips,'' \emph{Engineering Applications of Artificial
  Intelligence}, vol. 124, p. 106630, 2023.

\bibitem{huang2024uncertain}
Z.~Huang, Y.~Ning, and F.~Liu, ``Uncertain multi-objective programming model of
  tourist route considering tourist preference,'' \emph{Journal of Industrial
  and Management Optimization}, vol.~20, no.~8, pp. 2640--2651, 2024.

\bibitem{potter1994cooperative}
M.~A. Potter and K.~A. De~Jong, ``A cooperative coevolutionary approach to
  function optimization,'' in \emph{International Conference on Parallel
  Problem Solving from Nature}.\hskip 1em plus 0.5em minus 0.4em\relax
  Springer, 1994, pp. 249--257.

\bibitem{ficici2008multiobjective}
S.~G. Ficici, ``Multiobjective optimization and coevolution,''
  \emph{Multiobjective Problem Solving from Nature: From Concepts to
  Applications}, pp. 31--52, 2008.

\bibitem{antonio2015non}
L.~M. Antonio and C.~A.~C. Coello, ``A non-cooperative game for faster
  convergence in cooperative coevolution for multi-objective optimization,'' in
  \emph{2015 IEEE Congress on Evolutionary Computation (CEC)}.\hskip 1em plus
  0.5em minus 0.4em\relax IEEE, 2015, pp. 109--116.

\bibitem{iorio2004cooperative}
A.~W. Iorio and X.~Li, ``A cooperative coevolutionary multiobjective algorithm
  using non-dominated sorting,'' in \emph{Genetic and Evolutionary Computation
  Conference}.\hskip 1em plus 0.5em minus 0.4em\relax Springer, 2004, pp.
  537--548.

\bibitem{gong2016multiobjective}
M.~Gong, H.~Li, E.~Luo, J.~Liu, and J.~Liu, ``A multiobjective cooperative
  coevolutionary algorithm for hyperspectral sparse unmixing,'' \emph{IEEE
  Transactions on Evolutionary Computation}, vol.~21, no.~2, pp. 234--248,
  2016.

\bibitem{omidvar2021review}
M.~N. Omidvar, X.~Li, and X.~Yao, ``{A review of population-based
  metaheuristics for large-scale black-box global optimization—Part I},''
  \emph{IEEE Transactions on Evolutionary Computation}, vol.~26, no.~5, pp.
  802--822, 2021.

\bibitem{ma2018survey}
X.~Ma, X.~Li, Q.~Zhang, K.~Tang, Z.~Liang, W.~Xie, and Z.~Zhu, ``A survey on
  cooperative co-evolutionary algorithms,'' \emph{IEEE Transactions on
  Evolutionary Computation}, vol.~23, no.~3, pp. 421--441, 2018.

\bibitem{antonio2017coevolutionary}
L.~M. Antonio and C.~A.~C. Coello, ``Coevolutionary multiobjective evolutionary
  algorithms: Survey of the state-of-the-art,'' \emph{IEEE Transactions on
  Evolutionary Computation}, vol.~22, no.~6, pp. 851--865, 2017.

\bibitem{xu2023large}
P.~Xu, W.~Luo, X.~Lin, J.~Zhang, and X.~Wang, ``A large-scale continuous
  optimization benchmark suite with versatile coupled heterogeneous modules,''
  \emph{Swarm and Evolutionary Computation}, vol.~78, p. 101280, 2023.

\bibitem{li2021mlfs}
H.~Li, F.~He, Y.~Chen, and Y.~Pan, ``{MLFS-CCDE: multi-objective large-scale
  feature selection by cooperative coevolutionary differential evolution},''
  \emph{Memetic Computing}, vol.~13, pp. 1--18, 2021.

\bibitem{guo2023cooperative}
T.~Guo, Y.~Mei, K.~Tang, and W.~Du, ``Cooperative co-evolution for large-scale
  multi-objective air traffic flow management,'' \emph{IEEE Transactions on
  Evolutionary Computation}, vol.~28, no.~6, 1644--1658.

\bibitem{omidvar2011smart}
M.~N. Omidvar, X.~Li, and X.~Yao, ``Smart use of computational resources based
  on contribution for cooperative co-evolutionary algorithms,'' in
  \emph{Proceedings of the 13th Annual Conference on Genetic and Evolutionary
  Computation}, 2011, pp. 1115--1122.

\bibitem{luo2019many}
W.~Luo, Y.~Qiao, X.~Lin, P.~Xu, and M.~Preuss, ``Many-modal optimization by
  difficulty-based cooperative co-evolution,'' in \emph{2019 IEEE Symposium
  Series on Computational Intelligence (SSCI)}.\hskip 1em plus 0.5em minus
  0.4em\relax IEEE, 2019, pp. 1907--1914.

\bibitem{xu2023difficulty}
P.~Xu, W.~Luo, X.~Lin, Y.~Chang, and K.~Tang, ``Difficulty and
  contribution-based cooperative coevolution for large-scale optimization,''
  \emph{IEEE Transactions on Evolutionary Computation}, vol.~27, no.~5, pp.
  1355--1369, 2023.

\bibitem{liang2018novel}
Z.~Liang, X.~Wang, Q.~Lin, F.~Chen, J.~Chen, and Z.~Ming, ``A novel
  multi-objective co-evolutionary algorithm based on decomposition approach,''
  \emph{Applied Soft Computing}, vol.~73, pp. 50--66, 2018.

\bibitem{huang2019automatic}
T.~Huang, Y.-J. Gong, Y.-H. Zhang, Z.-H. Zhan, and J.~Zhang, ``Automatic
  planning of multiple itineraries: A niching genetic evolution approach,''
  \emph{IEEE Transactions on Intelligent Transportation Systems}, vol.~21,
  no.~10, pp. 4225--4240, 2019.

\bibitem{deb2002fast}
K.~Deb, A.~Pratap, S.~Agarwal, and T.~Meyarivan, ``{A fast and elitist
  multiobjective genetic algorithm: NSGA-II},'' \emph{IEEE Transactions on
  Evolutionary Computation}, vol.~6, no.~2, pp. 182--197, 2002.

\bibitem{xu2021multi}
Y.~Xu, Q.~Guo, A.~Tan, L.~Xu, Y.~Tu, and S.~Liu, ``{Multi-objective route
  planning of museum guide based on an improved NSGA-II algorithm},'' in
  \emph{Journal of Physics: Conference Series}, vol. 1828.\hskip 1em plus 0.5em
  minus 0.4em\relax IOP Publishing, 2021, p. 012051.

\end{thebibliography}

	\vfill
	
\end{document}